  \providecommand\BibTeX{{%
    \normalfont B\kern-0.5em{\scshape i\kern-0.25em b}\kern-0.8em\TeX}}}
\newif\ifnotes
\newcommand{\francesco}[1]{\textcolor{blue}{{\bf [Francesco: }{\em #1}{\bf ]}}}
\newcommand{\dietmar}[1]{\textcolor{green!55!blue}{{\bf [Dietmar: }{\em #1}{\bf ]}}}
\newcommand{\tommaso}[1]{\textcolor{magenta}{{\bf [Tommaso: }{\em #1}{\bf ]}}}
\newcommand{\lucio}[1]{\textcolor{violet}{{\bf [Lucio: }{\em #1}{\bf ]}}}
\def\wanote#1{\todo[size=\scriptsize,backgroundcolor=white,linecolor=red,bordercolor=red]{\textit{WA}: #1}} % Notes by Walter
\newcommand{\francesco}[1]{}
\newcommand{\dietmar}[1]{}
\newcommand{\tommaso}[1]{}
\newcommand{\lucio}[1]{}
\def\wanote#1{\todo[size=\scriptsize,backgroundcolor=white,linecolor=red,bordercolor=red]{}}
\pgfplotsset{compat=1.17}
\newtheorem{mydef}{Definition}
\journal{Information Sciences}
\begin{document}
%\tableofcontents
%\francesco{I put a Table of Contents only to see if the structure of the paper is OK}
%%
%% The "title" command has an optional parameter,
%% allowing the author to define a "short title" to be used in page headers.
%\title{A Theoretical Model for Driving Conversational Recommendations}
\title{Conversational Recommendation:\\ Theoretical Model and Complexity Analysis}

%%
%% The "author" command and its associated commands are used to define
%% the authors and their affiliations.
%% Of note is the shared affiliation of the first two authors, and the
%% "authornote" and "authornotemark" commands
%% used to denote shared contribution to the research.
%\author{Ben Trovato}
%\authornote{Both authors contributed equally to this research.}
%\email{trovato@corporation.com}
%\orcid{1234-5678-9012}
%\author{G.K.M. Tobin}
%\authornotemark[1]
%\email{webmaster@marysville-ohio.com}
%\affiliation{%
%  \institution{Institute for Clarity in Documentation}
%  \streetaddress{P.O. Box 1212}
%  \city{Dublin}
%  \state{Ohio}
%  \postcode{43017-6221}
%}
%
%\author{Lars Th{\o}rv{\"a}ld}
%\affiliation{%
%  \institution{The Th{\o}rv{\"a}ld Group}
%  \streetaddress{1 Th{\o}rv{\"a}ld Circle}
%  \city{Hekla}
%  \country{Iceland}}
%\email{larst@affiliation.org}
%
%\author{Valerie B\'eranger}
%\affiliation{%
%  \institution{Inria Paris-Rocquencourt}
%  \city{Rocquencourt}
%  \country{France}
%}

%%
%% By default, the full list of authors will be used in the page
%% headers. Often, this list is too long, and will overlap
%% other information printed in the page headers. This command allows
%% the author to define a more concise list
%% of authors' names for this purpose.

%%%%%\renewcommand{\shortauthors}{Anonymous}

%%
%% The abstract is a short summary of the work to be presented in the
%% article.

%\author{}
\author[ba]{Tommaso Di Noia}
\ead{tommaso.dinoia@poliba.it}
\address[ba]{Politecnico di Bari, Italy}

\author[tus]{Francesco Donini\corref{cor1}}
%\corauth[cor]{Corresponding author.}
\ead{donini@unitus.it}
\address[tus]{Università degli Studi della Tuscia, Italy}

\author[kl]{Dietmar Jannach}
\ead{dietmar.jannach@aau.at}
\address[kl]{University of Klagenfurt, Austria}

\author[ba]{\mbox{Fedelucio Narducci}}
\ead{fedelucio.narducci@poliba.it}

\author[ba]{Claudio Pomo}
\ead{claudio.pomo@poliba.it}

\cortext[cor1]{Corresponding author.}

%%%%\affiliation{organization={},%Department and Organization
%%%%	addressline={},
%%%%	city={},
%%%%	postcode={},
%%%%	state={},
%%%%	country={}}

\newtheorem{theorem}{Theorem}
\newtheorem{corollary}{Corollary}
\newtheorem{example}{Example}

\begin{abstract}
Recommender systems help users find items of interest in situations of information overload in a personalized way, using needs and preferences of individual users. 
In conversational recommendation approaches, needs and preferences are acquired by the system in an interactive, multi-turn dialog which is usually driven by incrementally asking users about their preferences about item features or individual items. A central research goal in this context is efficiency, evaluated with respect to the number of required interactions until a satisfying item is found. 
%This is usually accomplished by making inferences about the best next question to ask to the user.
Today, research on dialog efficiency is almost entirely empirical, aiming to demonstrate, for example, that one strategy for selecting questions to ask the user is better than another one in a given application. 
With this work, we complement empirical research with a theoretical, domain-independent model of conversational recommendation. This model, designed to cover a range of application scenarios, allows us to investigate the efficiency of conversational approaches in a formal way, in particular with respect to the computational complexity of devising optimal interaction strategies. %Through such a theoretical analysis we show that finding an efficient conversational strategy is NP-hard, and in PSPACE in general, but for particular kinds of catalogs the upper bound lowers to POLYLOGSPACE. From a practical point of view, this result implies that catalog characteristics can strongly influence the efficiency of individual conversational strategies and should therefore be considered when designing new strategies.
%and it is the first aspect to be analyzed when looking for new interaction protocols.}
An experimental evaluation empirically confirms our findings.
\end{abstract}

\begin{keyword}
Conversational Recommender Systems \sep
Complexity Analysis
	%% keywords here, in the form: keyword \sep keyword
	
	%% PACS codes here, in the form: \PACS code \sep code
	
	%% MSC codes here, in the form: \MSC code \sep code
	%% or \MSC[2008] code \sep code (2000 is the default)
	
\end{keyword}
%%%

\maketitle

\section{Introduction}
System-generated recommendations have become a common feature of modern online services such as e-commerce sites, media streaming platforms, and social networks. In many cases, the suggestions made by the underlying recommender systems are personalized according to the stated or assumed needs and preferences by the user. In the most prominent applications of recommender systems, \eg on Amazon.com, Netflix, or YouTube, the user preferences are estimated based on the observed past user behavior.  There are, however, also a number of application domains where no past interactions logs are available or where the user's needs and preferences might be different each time the user interacts with the service. Consider, for example, someone seeking a recommendation for a restaurant this evening for a party of four, where the requirements include that the location is nearby, that the prices are modest, and that there is a vegetarian option. In such a situation, the user's current needs and preferences have to be interactively acquired by the system to make a suitable recommendation.

The class of systems that support such interactions are called \emph{Conversational Recommender Systems} (\crs).
In these systems, the recommendation process consists of an interactive, multi-turn dialog, where the system's goal is to learn about the user preferences to the extent that appropriate recommendations can be made. The corresponding preference elicitation process can be implemented in different ways, ranging from predefined fill-out forms to natural language interfaces%, both in written and spoken form,
---see \cite{jannach2020survey} for an overview. In particular for this latter class of interfaces we observed substantial progress in terms of voice recognition and natural language understanding in recent years, leading to the development of voice-controlled devices like Apple's Siri or Amazon's Alexa, and continuously improved chat-bot systems.

%Independent of the interaction modality, a common approach implemented in many \crs is to ask  users about desired or undesired features in sought items, or to ask them about individual items they like or dislike.  This applies both for traditional critiquing approaches \cite{chenpucritiquing2012}, interactive advisory systems \cite{Jannach:2004:ASK:3000001.3000153}, and recent learning-based approaches \cite{christakopoulou2016towards,chen-etal-2019-towards}.
In that context, a typical goal when designing a \crs is to minimize the effort for users by asking as few questions as possible, \ie to increase the \emph{efficiency} of the dialog.

%The underlying assumption is that a lower number of required interactions leads to a better usable system. Different proposals were made in the literature to increase the efficiency of the recommendation process. These include the selection of predefined dialog paths according to the user's expertise \cite{Jannach:2004:ASK:3000001.3000153}, the automatic creation of compound critiques based on different strategies \cite{DBLP:conf/ah/ZhangP06,Reilly:ACBR2004}, or the selection of the best next question to ask based on reinforcement learning strategies \cite{Mahmood:2009:IRS:1557914.1557930,christakopoulou2016towards}.

Today, research in this area is almost entirely empirical. In fact, typical research designs are based on simulations or user studies, in which two or more interaction strategies are compared in one or two application domains based on real or synthetic datasets. The corresponding efficiency measures are, for example, the number of required user interactions or the perceived difficulty and effort of the recommendation dialogs, where the main assumption is that a lower number of required interactions leads to a better usable system.

Such empirical studies are certainly important and insightful. However, little is  known about the theoretical aspects of the underlying interactive recommendation processes.
%In many cases, the recommendation process is defined in an informal or semi-formal way, or only given implicitly in terms of an algorithm that selects the next questions to ask.
Unfortunately, theoretical questions regarding, \eg the computational complexity of determining a good or the best interaction strategy can not be answered without a formal characterization of the overall problem.

% DJ: framework or model?
With this work, we address this research gap and provide a theoretical model of conversational recommendation. The model is designed in a domain-independent way and aims to cover a wide range of realistic application scenarios. A conversational recommendation process is modeled as a sequence of states, where state transitions correspond to common \emph{conversational moves} \cite{Reichman85} that can be found in the literature. Among the possible actions taken by the user we admit utterances for expressing preferences on items or features, or for relaxing or revise previously stated preferences.

%In particular preference statements regarding item features or individual items and the relaxation or revision of previously stated preferences.
Since our model is agnostic about the application domain and the algorithm that is used to select and rank the objects for recommendation---\ie the recommendation algorithm---, it serves as a basis to analyze important theoretical properties of conversational recommendation processes.

The main contribution of this work is the study of the computational complexity for finding an efficient conversational strategy in term of number of dialog turns.

%We particularly focus on the problem of the computational complexity of finding an efficient conversational strategy.
From our study we found that:
\begin{itemize}
    \item the problem of finding an efficient conversational strategy in terms of number of dialog turns is NP-hard, but in PSPACE\footnote{PSPACE is a complexity class that contains all decision problems that can be solved by a deterministic Turing machine 
    by an algorithm whose space complexity is bounded by a polynomial function in the size of the input.};
    \item some specific factors of the item catalog influence the complexity of the problem;
    \item for a special class of catalogs, the upper bound lowers to POLYLOGSPACE\footnote{POLYLOGSPACE is a complexity class that contains all decision problems that can be solved on a deterministic Turing machine by an algorithm whose space complexity is bounded by a polylogarithmic function in the size of the input.}.
\end{itemize}
%Our theoretical analysis reveals that this problem is NP-hard, but in PSPACE in general, and we furthermore derive which factors in the item catalog influence the complexity of the problem.
%In particular, we show that for a special class of catalogs, the upper bound lowers to POLYLOGSPACE.
From a practical perspective, our analysis leads to the observation that the efficiency of a conversation strategy is tied to the characteristics of the available item catalog.
%there are cases where, whatever the strategy implemented to select the next question about a feature's value in the dialog, its efficiency is driven by the catalog characteristics.}
Observations from an empirical analysis on datasets based on MovieLens-1M support these theoretical considerations.

The paper is organized as follows. In the next section we discuss existing works and outline our research goals. We provide a model formalization in Section \ref{sec:formalization}, and then introduce our theoretical results in Section \ref{sec:devising-optimal}. Section \ref{sec:experiment} is devoted to an experimental evaluation to support the outcomes of our empirical analyses. A summary and outlook of the presented results, together with the identification of future directions close the paper.
In order to improve the paper readability, Table \ref{tab:symbols} and Table \ref{tab:acr} collect acronyms and symbols used in the paper.

%Please add the following packages if necessary:
%\usepackage{booktabs, multirow} % for borders and merged ranges
%\usepackage{soul}% for underlines
%\usepackage[table]{xcolor} % for cell colors
%\usepackage{changepage,threeparttable} % for wide tables
%If the table is too wide, replace \begin{table}[!htp]...\end{table} with
%\begin{adjustwidth}{-2.5 cm}{-2.5 cm}\centering\begin{threeparttable}[!htb]...\end{threeparttable}\end{adjustwidth}
\begin{table}[!htp]\centering
\caption{Table of symbols}\label{tab:symbols}
\rowcolors{2}{}{gray!15}
\scriptsize
\begin{tabular}{lrr}\toprule
\textbf{Symbol} &\textbf{Description} \\\midrule
$\sigma$ &Substitution from variable to value \\
$\hat{I}$ &Ideal item \\
\textit{AV} &Set of active values \\
\textit{B} &Boolean Domain \\
$\C$ &Catalog of Items \\
\textit{D} &Domain \\
\textit{f} &Feature \\
\I &Set of ideal items \\
\textit{I} &Ideal items \\
IS1 &\makecell[r]{Item set with only a few features, but with a\\larger number of distinct values} \\
IS2 &\makecell[r]{Item set with a larger number of features,\\but each of them only has a few distinct values} \\
$\K$ &Set of costraints \\
M &Maximum number of interactions \\
$\N$ &Disliked items \\
$\P$ &Liked Items \\
P1 &\makecell[r]{Protocol in which the CRS\\does not ask for disliked values when an item is rejected} \\
P2 &\makecell[r]{Protocol in which the CRS always\\asks for a disliked value when the user rejects an item,\\ the user provides such a value, and the CRS\\discards all items sharing the same feature value }\\
PRI &Positevely-rated Items \\
\textit{Q} &Query vector \\
\textit{R} &Recommender System \\
$\S$ &Set of recommandable items \\
\textit{T} &Set of terms (union of variables and values) \\
\textit{U} &User Model \\
\textit{V} &Union of all domains \\
\textit{X} &Set of variables \\
\bottomrule
\end{tabular}
\end{table}
%Please add the following packages if necessary:
%\usepackage{booktabs, multirow} % for borders and merged ranges
%\usepackage{soul}% for underlines
%\usepackage[table]{xcolor} % for cell colors
%\usepackage{changepage,threeparttable} % for wide tables
%If the table is too wide, replace \begin{table}[!htp]...\end{table} with
%\begin{adjustwidth}{-2.5 cm}{-2.5 cm}\centering\begin{threeparttable}[!htb]...\end{threeparttable}\end{adjustwidth}
\begin{table}[!htp]\centering
\caption{Table of acronyms}\label{tab:acr}
\rowcolors{2}{}{gray!15}
\scriptsize
\begin{tabular}{lrr}\toprule
\textbf{Acronyms} &\textbf{Description} \\\midrule
CRS &Conversational Recommender Systems \\
BTD &Binary Decision Tree \\
DT &Decision Tree \\
PRI &Positevely-rated Items \\
\bottomrule
\end{tabular}
\end{table}

\begin{comment}
%Introduction
...

%Francesco
In spite of the thorough experimental setting for testing new solutions to the Conversational Recommendation problem, we feel a gap with the theoretical side of the problem itself.
\francesco{this is just my guess.}
%This is probably due to the complexity of the problem itself: different solutions start from different input data (preferences, ratings, user's interactions\ldots) and different output (ranking, questions, \ldots), with different measures (satisfaction, length of interaction, \ldots).

An advancement in the theoretical formalization of Conversational Recommendation would bring the following benefits:
\begin{enumerate}
    \item distinguish between related but different theoretical problems whose solutions should not be compared under the same general name; for example,...
    \item identify the computational complexity of each different problem, in order to understand where the computational difficulty---if any---stems from, and for intractable problems, how difficult is to approximate an exact solution;
    \item evaluate how \emph{computationally adequate} are the implemented algorithms for the problem they declare to solve.
\end{enumerate}
Motivated by the above benefits, in this paper we first propose a formalization of Conversational Recommendation. The formalization includes as many details as possible, to capture as many sub-problems as possible, and show that a thorough formalization is possible. Then we study the efficiency problem \cite{jannach2020survey} in CoRS, and identify a source of complexity in the order in which particular questions (slot-fillings) are proposed to the user. Such a result is meant to pave the way for other results, as CoRS sub-problems are formalized. Finally, we discuss all the implications of our result for devising conversation strategies.

\end{comment} 

\section{Previous Work and Research Goals} \label{sec:rel_work}
% DJ: Many things could be said, but let us just focus on the problem of efficiency in the literature.
Dialog efficiency is one of the main dimensions in which \crs are evaluated in research literature \cite{jannach2020survey}. Here, efficiency usually refers to the time or effort that is needed by a user to find a suitable item. The corresponding underlying assumption is that users will find a system more useful if it requires less effort for them. Likewise, if a recommendation dialog takes too many steps, %it might be the case that
users might quit the conversation or, even worse, abandon the system as a whole.

\subsection{Increasing Dialog Efficiency}
% 1 what has been done

Efficiency can in general be increased by designing the system in a way that guides users more directly to the items that match their preferences. In a non-interactive recommender system, efficiency is achieved through accurate predictions of what is relevant for users given their long-term preferences. In \crs, where the preferences are interactively elicited, one can \emph{in addition} try to find better conversation \emph{strategies}. By ``conversation strategy'' we mean the choices that \crs can make during the dialog. % DJ Do we need a better/alternative/more formal definition?
After a few interaction steps, a system might, for example, either decide to make a recommendation or to ask more questions in order to obtain a more complete picture of user preferences. While asking more questions means more interaction cycles in the short term, having more information might reduce the risk of providing irrelevant recommendations and more user effort in later phases of the dialog.
Another decision point exists after the system has decided to ask more questions. In this case, the system can usually select between several options regarding what to ask next.

In the literature, different technical approaches were proposed to increase the efficiency of the recommendation process though improved conversation strategies. Here, we give a few examples of different types of such approaches. Note, however, that with this work our aim is not to propose a new strategy, but to understand, on a theoretical level, how complex it is to make optimal choices. A summary of our analysis is reported in Table \ref{tab:dialog_management}.

The conceptually most simple approach is to build a \crs that is based on a static set of application-specific rules about how to continue the dialog. In the interactive sales advisory system Advisor Suite~\cite{Jannach:2004:ASK:3000001.3000153,jannach2007rapid}, for example, the possible dialog paths are predefined in terms of a state-machine. How the system can proceed in the dialog is thus defined through the transition graph. The actual choice of the system is then determined based on manually-defined decision rules. For example, the system could adapt the way the questions are asked depending on the user's self-reported expertise in the domain or based on previous user answers. To what extent such hand-crafted decision rules actually increase the efficiency of the conversations was unfortunately not evaluated in the mentioned papers.

Various more elaborate approaches were proposed in the context of \crs that are based on \emph{critiquing}. In critiquing-based systems \cite{chenpucritiquing2012}, the idea is that the \crs provides a recommendation relatively early in the dialog or even starts with an initial recommendation. Users can then apply critiques on a given recommendation---\eg in the form ``cheaper'' or ``lighter'' for a camera recommender system---after which the system makes a new recommendation.
The system therefore interactively refines the preference model until a recommendation is accepted (or no recommendation is found to be suitable). While such an approach in its basic form is intuitive and conceptually simple, it might turn out to be not very efficient in particular when there are many item attributes the user can apply critiques on. Depending on the implementation, a critiquing system might offer all attributes for critiquing at once or incrementally ask for desired feature values, also known as ``slot filling''~\cite{DBLP:conf/eais/BelliniBNSNP20, DBLP:journals/umuai/NarducciBGLS20}. In case of incremental slot filling, a naive implementation might end up asking too many questions about features that are either \emph{(i)}  not relevant for most of the users or \emph{(ii)} not particularly well-suited to narrow down the range of remaining options. % DJ cold use enumerate if needed later on.
%Please add the following packages if necessary:
%\usepackage{booktabs, multirow} % for borders and merged ranges
%\usepackage{soul}% for underlines
%\usepackage[table]{xcolor} % for cell colors
%\usepackage{changepage,threeparttable} % for wide tables
%If the table is too wide, replace \begin{table}[!htp]...\end{table} with
%\begin{adjustwidth}{-2.5 cm}{-2.5 cm}\centering\begin{threeparttable}[!htb]...\end{threeparttable}\end{adjustwidth}
\begin{adjustwidth}{-2.5 cm}{-2.5 cm}
\centering
\begin{threeparttable}[!htb]\centering
\caption{Dialog Management Overview}\label{tab:dialog_management}
\rowcolors{2}{}{gray!15}
\scriptsize

\begin{tabular}{wl{1.8cm}wc{1.8cm}wc{1.2cm}wc{3.5cm}wc{1.8cm}wc{1.8cm}}\toprule
\multirow{2}{*}{\textbf{Work}} &\multirow{2}{*}{\shortstack{\textbf{Interaction}\\\textbf{Strategy}}}
&\multirow{2}{*}{\shortstack{\textbf{Critique}\\\textbf{Based}}} 
&\multirow{2}{*}{\textbf{Next question}} &\multicolumn{2}{c}{\textbf{Preference Elicitation}} \\\cmidrule{5-6}
& & & &\textbf{Feedback} &\textbf{Item Selection} \\\midrule
\cite{Jannach:2004:ASK:3000001.3000153, jannach2007rapid} &Rule Based & & &  & \\
\cite{DBLP:conf/eais/BelliniBNSNP20} &Rule Based & & &\makecell{Feature-Based} &  \\
\cite{DBLP:journals/umuai/NarducciBGLS20} &Rule Based & & &\makecell{Feature-Based\\Item-Based} & Single \\
\cite{mccarthy2004thinking, chen2007preference, DBLP:conf/recsys/ViappianiPF07, Pu:CHI2006} & &Dynamic & &Feature-Based &Single \\
\cite{Reilly:ACBR2004} & &Compound & &Feature-Based &Single \\
\cite{Shimazu2002, DBLP:conf/eee/MirzadehRB05, Chakraborty2019RecommendenceAF, Yin:2017:DID:3097983.3098148} & Slot Filling  & &\makecell{Entropy-Based}&Feature-Based & \\
\cite{Sun:2018:CRS:3209978.3210002,tsumita2019dialogue, mahmood2009improving, Christakopoulou:2018} & Slot Filling  & &\makecell{Reinforcement Learning} &Feature-Based & \\
\cite{DBLP:conf/chi/LoeppHZ14} & & &\makecell{Information Gain} &Item-Based &Pair \\
\cite{narducci2018improving} & Slot Filling & & &\makecell{Feature-Based\\Item-Based} & \\
\cite{DBLP:conf/iui/CareniniSP03, AM:IUI2002, iovine2021empirical} & & &\makecell{Random/Popularity/\\Entropy/PopEnt/\\Item-Item personalized} &Item-Based &Single \\
\cite{christakopoulou2016towards} & & &\makecell{Random/Greedy/\\Maximum Variance/\\Maximum Item Trait/\\Minimum Item Trait/\\Upper Confidence/\\Thompson Sampling} &Item-Based &Pair \\
\cite{Zhao:2013:ICF:2541154.2505690} & & &\makecell{Thompson Sampling \\with PMF} &Item-Based &Single \\
\bottomrule
\end{tabular}
\end{threeparttable}
\end{adjustwidth}
\vspace{10px}
To deal with such problems, \emph{dynamic} and \emph{compound} critiquing approaches were proposed  \cite{mccarthy2004thinking,Reilly:ACBR2004,chen2007preference,DBLP:conf/recsys/ViappianiPF07,Pu:CHI2006}. In the compound critiquing approach \cite{Reilly:ACBR2004}, for example, the idea is not only to present critiques to the user that concern more than one feature (\eg ``less expensive \emph{and} lighter''), but to determine possible compound critiques based on the properties of the item catalog. During an ongoing session, it might for example be most helpful to propose compound critiques that, if applied, would rule out a larger fraction of the available options.

The dynamic and data-based selection of the next question to ask or, more generally, the next conversational move, is however not limited to critiquing-based approaches. In the context of such slot-filling strategies---where the system questions mostly relate to item properties (or: facets)---various approaches were put forward to determine the best order of the questions. %\cite{Shimazu2002,Thompson:2004:PSC:1622467.1622479,Chakraborty2019RecommendenceAF,zeng2018eliciting}.
Often, such approaches are \emph{entropy-based} and consider the potential effects of individual questions (and their answers) on the remaining space of options
\cite{DBLP:conf/eee/MirzadehRB05,Shimazu2002,Chakraborty2019RecommendenceAF, Yin:2017:DID:3097983.3098148}. In some cases, feature popularity information is considered as well \cite{DBLP:conf/eee/MirzadehRB05}.
Several alternative technical approaches to decide on the next action are based on machine learning, using, for example, reinforcement learning techniques \cite{Sun:2018:CRS:3209978.3210002,tsumita2019dialogue,mahmood2009improving} or recurrent neural networks \cite{Christakopoulou:2018}.
%Please add the following packages if necessary:
%\usepackage{booktabs, multirow} % for borders and merged ranges
%\usepackage{soul}% for underlines
%\usepackage[table]{xcolor} % for cell colors
%\usepackage{changepage,threeparttable} % for wide tables
%If the table is too wide, replace \begin{table}[!htp]...\end{table} with
%\begin{adjustwidth}{-2.5 cm}{-2.5 cm}\centering\begin{threeparttable}[!htb]...\end{threeparttable}\end{adjustwidth}
\begin{table}[!htp]\centering
\caption{Dialog Efficiency Overview}\label{tab:metrics}
\rowcolors{2}{}{gray!15}
\scriptsize
\begin{tabular}{lcc}\toprule
\textbf{Work} &\textbf{Metric} &\textbf{Interaction Mode} \\\midrule
\cite{Sun:2018:CRS:3209978.3210002} &Number of Interactions &Simulation/Real \\
\cite{adomavicius_new_2007, DBLP:conf/recsys/ViappianiB09,Grasch:2013:RTC:2507157.2507161} &Number of Interactions &Simulation \\
\cite{DBLP:conf/ecis/ChenHC08, ResearchNoteContingencyApproach2013,Pecune:HAI2019,DBLP:conf/aaai/ChenP06,MapBasedRicci2010,DBLP:conf/chi/LoeppHZ14} & Questionnaires &Real \\
\cite{McCarthy2004Onthe,mccarthy2006group, Jin:2019:MEC:3357384.3357923} &Questionnaires &Simulation \\
\cite{warnestaal2005user,Ikemoto2019} &Completion Time &Real \\
\cite{DBLP:conf/ah/ZhangP06,reilly2007comparison,mahmood2009improving} &Completion Time/Questionnaires &Real \\
\cite{iovine2020conversational} &Completion Time/Questionnaires &Simulation \\
% 15 &number of interactions &simulation/real \\
% 17 &completion time/questionnaires &real \\
% 19 & questionnaires &real \\
% 25 &number of interactions &simulation \\
% 26 & &simulation \\
% 27 & &simulation \\
% 28 &number of interactions &simulation \\
% 29 &number of interactions &simulation \\
% 30 &completion time/questionnaires &real \\
% 31 &completion time and time/questionnaires &real \\
% 32 & questionnaires &real \\
% 33 &number of interactions &simulation \\
% 34 &completion time &real \\
% 35 & questionnaires &simulation \\
% 36 &completion time &real \\
% 37 &completion time/ questionnaires &simulation \\
% 38 & questionnaires &real \\
% 39 & questionnaires &real \\
% 40 & questionnaires &real \\
\bottomrule
\end{tabular}
\end{table}
Instead of asking for desired item features, some approaches are based on eliciting feedback on items considered as a whole. This could be done by asking for like or dislike statements for individual items or by asking for the relative preference regarding two or more items \cite{DBLP:conf/chi/LoeppHZ14}. One main question in that context is how to select these items or sets of items to maximize the efficiency of the process. Such a selection can for example be based on item popularity or based on diversity considerations \cite{narducci2018improving,DBLP:conf/iui/CareniniSP03,AM:IUI2002}.
In some approaches, the system can also learn to decide if an absolute rating should be elicited or pairs of items should be presented to the user \cite{christakopoulou2016towards,Zhao:2013:ICF:2541154.2505690, iovine2021empirical}.

\subsection{Measuring Dialog Efficiency}
% 1 How can we measure in principle
The most common approach to measure dialog efficiency in the literature is to count the number of required interaction cycles until a recommendation is accepted by the user. Such an approach is followed in many works that are based on critiquing, but also in chatbot-like applications and in more recent learning based systems
\cite{McCarthy2004Onthe,mccarthy2006group,adomavicius_new_2007,DBLP:conf/recsys/ViappianiB09,mahmood2009improving,
DBLP:conf/ah/ZhangP06,reilly2007comparison,MapBasedRicci2010,Grasch:2013:RTC:2507157.2507161,Ikemoto2019,
narducci2018improving,Jin:2019:MEC:3357384.3357923,warnestaal2005user,Sun:2018:CRS:3209978.3210002}. In most of these works, the interaction between a user and the \crs is \emph{simulated}. In such simulations a software agent is developed, that has a certain preference profile and answers requests made by the system based on the underlying preferences. The common assumptions here are that the agent knows all preferences from the beginning, behaves rationally and truthfully, and does not change its mind during the conversation. In reality, not all assumptions might hold, and we include this aspect in our simulations (see later sections), by having users  revise their preferences, \eg in case no item matches the initial preferences.

Alternative efficiency measures that are often used in the context of user studies include the \emph{task completion time}  \cite{mahmood2009improving,Jin:2019:MEC:3357384.3357923,iovine2020conversational}. We may in general assume that shorter task completion times usually means that the recommendation process was more efficient. However, a longer task completion time can, depending on the domain, also mean that the users found more interesting options to explore \cite{DBLP:conf/ecis/ChenHC08}.

In addition to these objective measures---number of interaction turns and task completion time---some researchers also rely on \emph{subjective} measures in the context of studies that involve humans. In such studies, the participants are usually asked after the main experiment task how they perceived the effort that was needed, \eg to find a suitable item. In most cases, efficiency is only one of several variables that are assessed in such post-task questionnaires. Examples of works that use efficiency-related measures are \cite{ResearchNoteContingencyApproach2013,Pecune:HAI2019,DBLP:conf/chi/LoeppHZ14,DBLP:conf/aaai/ChenP06,mahmood2009improving}.
Table \ref{tab:metrics} summarizes the metrics adopted in each analyzed work.

\subsection{Research Goals}
\paragraph{Research Gaps and Goals}
Existing research on the efficiency of different conversational recommendation strategies, as mentioned above, is entirely empirical. Either the research is based on studies with simulated users, or it is based on user studies in which participants usually interact with a prototype system. In either case, such studies are based on one or a few particular experimental configurations. In particular, often only one specific item catalog is considered when the efficiency of a newly proposed conversation strategy is demonstrated. However, as the theoretical analysis shows later in this paper, the characteristics of the catalog, in particular, in terms of the number of item features, and the number of feature values that are shared by several items, may have an impact on the efficiency of a given strategy. Depending on such characteristics, for example, in one case it might be much more efficient to ask the user about a feature preference than asking for an item preference, whereas in another situation, the advantage is negligible.

Moreover, existing research has not looked yet at the computational complexity of choosing an optimal strategy in a conversation. Our research aims at closing this gap and provides a formal definition of the conversational recommendation problem, based on which such complexity analyses are performed. Furthermore, we complement our theoretical analysis with results obtained from simulation-based experiments with data elicited from a real dataset (MovieLens).

\paragraph{Research Scope}
A multitude of \crs have been proposed in the past, focusing on a variety of different problem domains. Depending on the target domain, such \crs are often diverse in terms of what functionality they offer. Specifically, existing \crs vary largely in terms of the supported user intents. In their recent work, Cai and Chen~\cite{CaiLi2020Umap} developed a comprehensive list of user intents and corresponding system actions (or: conversational moves) in \crs.

In our work, we focus on the most common \crs in the literature: those that base their conversations on desired or undesired values for item features (``slot filling'') and/or the acquisition of preferences of users regarding an item as a whole. Our formalization therefore includes actions that, for example, relate to the initial acquisition of preferred feature values, but also considers additional user actions in such applications, such as the relaxation or revision of constraints \cite{jannach2008finding}. The resulting formalization therefore represents an abstraction from various existing approaches from the literature, and the resulting insights therefore apply to a larger range of \crs implementations. In this context, note that our formalization considers the actual item ranking task as a black box. This means that our theoretical framework is independent of the specific type of algorithm that is used for determining the order of the recommendations. Thus any type of approach, \eg collaborative, content-based, or hybrid, can be used internally.

Regarding the measure for efficiency, our discussion showed that while a number of alternative measures were applied in the literature, counting the number of required interaction steps is the predominant choice of researchers. Therefore, we also use this measure in our theoretical analysis and in our simulations.

%\newpage
\begin{comment}
\color{gray}
\begin{itemize}
%Related work
\item Placeholder: add empirical (w/o a theoretical framework) results regarding efficiency
\item No results about computational complexity of conversations.
\item Add parts from old Section 3 (Lucio)
\item Motivation: efficiency is very important (users tend to quit after few interactions); we can add work that investigated the interaction cost in conversational recommender systems
\end{itemize}
\color{black}
\end{comment}

\section{Model Formalization} \label{sec:formalization}
% subsection "Formalization"
%DJ: add some non-technical descriptions here.
On a conceptual level, a \crs---as considered in our study---works as follows. The system main task in the interaction with the users is to elicit their preferences\footnote{In this paper, we use the words ``preference'', ``desire'', ``needs'' as synonyms.} regarding item features and items. To that purpose, the system maintains information about these preference statements in a user model. These preference statements determine which of the items of a given catalog qualify to be recommended. Therefore, in our theoretical framework we assume a retrieval-based item filtering approach, which is commonly used in critiquing-based and constraint-based approaches to recommendation \cite{felfernig2015constraint}. In analogy to database-oriented approaches, we therefore use the term ``query'' when referring to the user's preferences, as these preferences lead to the retrieval of matching items from the catalog. The retrieved items are then ranked according to their assumed utility for the user. Such a ranking can be based on any type of information, \eg the popularity of certain items. In order to carry a general analysis, in our approach we abstract from the details of this ranking, as mentioned above.

To model the conversational recommendation process, we rely on the notion of \emph{state} of a conversation, and what transformations this state can be subject to, depending on the interaction. Each expressed user preference, for example, leads to a change of the state of the conversation and may also imply that the set of suitable item recommendation changes. This formalization through conversation states ultimately serves us as a basis to study the efficiency of conversational strategies, in the sense that the most efficient conversational strategies will be the ones minimizing the number of states the conversation must pass through, to reach an end.

\subsection{Basic components}
We summarize below the components and symbols of our formalization, and we explain them in detail in the rest of the section.
To give to the reader an immediate grasp of our formalization, we summarize in the schema below the function of each component of the user model:
\vskip 1\baselineskip
\noindent
\begin{tabular}{c||c|c}
                      & positive preferences\ldots & negative preferences\ldots \\ \hline\hline
 \ldots about features & Query $Q$  & Constraints $\K$ \\ \hline
  \ldots about whole items & set of items \P  & set of items \N \\ \hline
\end{tabular}
\vskip 1\baselineskip
\noindent
More formally, we have:
\begin{enumerate}
    \item a \emph{Catalog} of items \C;
    \item the user model, which is a 4-vector $U=\Vector{Q,\K,\P,\N}$, where the \emph{query} $Q$ denotes the user's positive preferences regarding item features (\emph{desiderata}), \K are constraints, in the form of undesired values for a feature, \P is a set of positively-rated \emph{preferred} items, and \N is the set of items that are negatively rated by the user;
    %the current user's stated preferences as a \emph{query} $Q$ with constraints $\K$, her positively-rated \emph{preferred} items \P, and her negatively-rated \emph{disliked} items \N; globally, we denote the user model as $U=\Vector{Q,\K,\P,\N}$, which contains all information regarding the user.
    \item the \emph{Recommender} system \Rec (a black box in this paper), that given $U$ ranks items in \C, and returns the top-k items, which we abbreviate as $\Rec(U)$; we stress our being agnostic in modeling \Rec, which could be content-based, or based on collaborative filtering, etc. Following the intuition, we assume that $\Rec(U) \cap \N = \emptyset $---that is, no negatively-rated item is ever proposed;
    \item the \emph{state} of the conversation as the 5-vector $\Vector{Q,\K,\P,\N,\Rec(U)}$ which we denote\footnote{Inspired by Python, we use ``+'' for vector concatenation, \eg $\Vector{A,B,C}+\Vector{D,E}=\Vector{A,B,C,D,E}$. } also as $U+\Rec(U)$, \ie the state is composed by all of the above elements but for the catalog, which we consider constant during the conversation;
    \item the interactions between the user and the recommender as transformations  of  $U$, denoted by $\tau$.
\end{enumerate}
This formalization provides us with the necessary notation to study \emph{strategies}, \ie algorithms that, given a state of the conversation, suggest the next move in order to proceed towards a successful recommendation.

To ease reading, we adopt the following syntax conventions: atomic elements are denoted by lowercase letters, \eg $x,t,v$; sets and vectors of atomic elements are denoted by uppercase letters, \eg $C,D,I,Q$; and sets and vectors of the previous sets and vectors are denoted by calligraphic letters, \eg $\C, \K$. We now delve into each element in detail.

\paragraph{Features}
Let $\{f_1, f_2,\ldots, f_p\}$ be a set of \emph{features} for items and queries, where each feature $f_i$ can take values in a \emph{domain} $D_i$. In the rest of this paper, we assume that features are only single-valued, \ie we do not consider set-valued features. When making examples, we often characterize the feature with a name instead of the index, as in $f_{cuisine}, f_{director}$.
%Domains are not necessarily disjoint; they can overlap---\eg the two domains of directors and actors in a Movie application, so that a user can ask for ``movies whose director also acts some role.''
We highlight the  \emph{Boolean domain} $B = \{\false,\true\}$ as a special one.
While any domain $D$ could be translated into a number of $|D|$ Boolean domains via one-hot encoding, we prefer not to use such an encoding because of its exponential-space blowup\footnote{The values of a feature need $\lceil\log_2(|D|)\rceil$ bits to be encoded, while their one-hot encoding needs $|D|$ bits.} that would hamper our subsequent computational study. We denote the union of all domains as $V = \left(\bigcup_{i=1}^p D_i \right)$.

\paragraph{Variables}
In addition to domain values, let $X = \{x_1,x_2,\ldots\}$ be a set of variables. Intuitively, a variable is a placeholder for a query feature which the user has not yet expressed a preference on.
%, or an item feature whose value is missing in the item description.
%A variable can occur more than once in a query, allowing a user to state a coreference, \eg in a movie recommender application, a user can ask for ``movies whose director also acts some role.''
As an abstraction of both variables and values, we define terms. A \emph{term} $t$ can be either a value in a domain, or a variable; so the set of terms is  $T = V \cup X$. Terms are needed to model user queries, which we introduce next.

\paragraph{Queries}
The user's stated preferences are represented by a \emph{query vector} of terms $Q = \Vector{t_1, t_2,\ldots, t_p}$, where each position $i$ (called \emph{slot}) in the vector is associated with a feature $f_i$; features act as names for vector positions---similar to field names in relational databases.
We say that ``the feature $f_i$ contains the term $t_i$ in $Q$'' when the $i$-th position in $Q$ holds the term $t_i$---written $f_i(Q) = t_i$.
 Intuitively, when a feature $f$ contains a variable $x$, it means that we have no knowledge about the user's preferences in this respect. Remember that one goal of the interactive process is to fill in such features/slots.

\paragraph{Constraints}
A query represents the \emph{positive} part of a user's  desires;
to represent the negative part---what the user definitely does not want---we define the \emph{constraints} on $Q$  as another vector $\K = \Vector{C_1,C_2,\ldots,C_p}$, where each constraint $C_i \subset D_i$ is a set of values the user dislikes for feature $f_i$. Such a constraint represents the negative part of the history of the conversation, namely, the negative answers given by the user during the interaction---\eg ``I don't want horror movies tonight'', which is formalized as $\mathit{horror} \in C_{genre}$. Observe that for such negative constraints, we limit this formalization to the most common case implemented in CRS, that is, we consider here only single values.
%francesco{I commented away the part below, since it is already in the conclusion.}
%A possible future extension of this formalization may include Boolean-combined constraints like, \eg ``I don't want downtown Chinese restaurants---restaurants that are both Chinese \emph{and} downtown---but I might accept Chinese restaurants elsewhere, or non-Chinese restaurants downtown''.

\paragraph{Items}
We represent items using the same feature space as the queries, but for variables.
%We represent items similarly to queries, without constraints.
Given  a catalog  $\mathcal{C} = \{I_1, I_2,\ldots \}$ of items, we let an item $I= \Vector{v_1, v_2, ..., v_p}$ be a vector where each feature $f_i$ in position $i$ contains value $v_i$---written $f_i(I)=v_i$. In this study we consider only \emph{complete knowledge}, \ie all feature values are known.\footnote{Extending the formalization to incomplete item knowledge is left for future work.}
%\emph{Complete Knowledge} is enforced when all terms in items do not belong to $X$---\ie every feature has a known value---but in our proposal, we admit \emph{Incomplete Knowledge}: a more general situation in which also items can have unknown features. This is a frequent case in e-commerce scenarios when, for example, a feature is not reported in the structured description of the item, but it is available in users' reviews (\eg a laptop battery lifetime).
%As another example, in a Service Recommender, each service could be an item to be recommended; the acceptance of credit cards CC1, CC2, etc.\ may be represented by Boolean features $f_1, f_2,\ldots$; in this setting, if for some service it is unknown whether it accepts or not a particular credit card, the corresponding feature contains a variable.
%We say that this more general setting assumes an \emph{Open World}\footnote{In contrast, assuming a \emph{Closed World} would mean that each Boolean feature whose value is unknown is assumed to be \false---\ie \false is the default value. Observe that in order to apply such a Closed World Assumption, for each non-Boolean domain a default value must be specified, a case which is not always possible.}, in accordance with an established vocabulary in Knowledge Representation \cite{Reiter-81}.

\paragraph{Substitutions}
During the interactive recommendation process, we continuously learn more about the user's preferences, which leads to changes in the user model $U$. We call such changes \emph{transformations}.
%We are interested in the transformations that occur to short-term preferences during the conversation.
We define the simplest form of transformation as slot filling, which in our setting means substituting a variable with a value.
A \emph{simple substitution} $\sigma: X \rightarrow V$ is a function mapping a variable $x$ to a value $v$. Apart from such a change, the substitution leaves the rest untouched---more formally, the substitution $\sigma$ is the identity function on every term but $x$, which is changed to $v$. Recall that we are interested in substitutions in the user model $U$, and not in the items, whose descriptions do not contain variables.

We extend $\sigma$ to vectors position-wise, \ie we have $\sigma(\Vector{t_1,t_2,\ldots,t_p}) = \Vector{\sigma(t_1),\sigma(t_2),\ldots,\sigma(t_p)}$. In this way, we can use the notation $\sigma(Q)$, knowing that $\sigma$ will change only the feature containing $x$.
Clearly, if a user already stated that s/he dislikes a specific feature value, such a value should never be proposed again during the conversation. Formalizing such an intuition, we say that a substitution is \emph{coherent with $\K$} if it does not substitute a variable with a value forbidden by the constraints, in formulas: for all $i=1,\ldots,p$ it holds that $\sigma(t_i) \not\in C_i$. From now on, we consider only substitutions coherent with $\K$.

We let substitutions $\sigma_1,\sigma_2,\ldots,\sigma_h$ be repeatedly applied to a term as $\sigma(t) = \sigma_1(\sigma_2(\ldots(\sigma_h(t)\ldots))$, and also such nested substitutions $\sigma$ can be applied to vectors position-wise, written as $\sigma(Q)$\footnote{Even if each substitution of the sequence changes one variable with a value, such a compositional extension allows us to write simply $\sigma(Q)$ instead of specifying which simple substitution changes which feature, as it would be in heavier notations like $\Vector{\sigma_1(t_1),\sigma_2(t_2),\ldots,\sigma_p(t_p)}$.}.
Moreover, the sequence of such nested substitutions preserves  the order (in reverse)
in which the subsequent slot-filling interactions change the state of the conversation.

 An item $I$ \emph{matches} a query $Q$ if there is a substitution $\sigma$ coherent with $\K$, such that $\sigma(Q) = I$. In practice, $\sigma$ formalizes all the slot fillings that would lead from $Q$ to the proposal of (exactly) $I$.
We say that a substitution $\sigma$, coherent with constraints $\K$, \emph{satisfies a query $Q$ in \C}
if there is an item $I \in \C$ such that $\sigma(Q)= I$.
Given a catalog \C, we write $Q(\C)$ for the set of items matching $Q$, in formulas: $Q(\C) = \{ I \in \C ~|~ \exists\sigma : \sigma(Q) = I \}$.

 %   \item $\sigma$ could substitute a variable for a variable, because it may be the case that both the user and the item descriptions do not specify a value for a feature $f$, both using a different variable filling $f$ in our formalization. Hence, to match the query with that item, $\sigma$ must substitute the user's variable with the item variable;

We note that when a feature $f$ contains a variable---\ie when the user specifies no preference for $f$---we do not know whether the user \emph{does not care} about $f$---so that the user would accept any value for $f$---or the user \emph{does not know} that there are some alternatives that fit her preferences while others do not. For instance, suppose a restaurant recommender describes items with some features, among which there is a Boolean feature regarding dress code; and suppose that when starting the conversation, the user did not mention preferences about dress code. When facing the proposal of a restaurant whose dress code feature is \true, the user might say ``\textit{no, I don't want dress code restaurants; sorry I didn't mention it before}''. Formally, this would mean that the conversation starts with a query $Q_0$ in which $f_{dressCode}(Q_0)=x$, the recommender (after some interaction) proposes a restaurant $I$ for which $f_{dressCode}(I)=\true$ (so $I$ matches $Q_0$ with $\sigma(x) = \true$), the user rejects it with the above dialog, and the conversation proceeds with the updated constraints $\K'$ in which $C_{dressCode} = \{\true\}$ (the disliked value).

\paragraph{Positively- and negatively-rated items.}
Both $Q$ and \K formalize the ``analytic'' part of a user's preferences, \ie what s/he thinks about particular values of item features. However, sometimes the user may want to express a holistic %\francesco{this adjective expresses exactly what I mean}
opinion on an item as a whole, \eg ``\textit{I don't want to go to restaurant Y anymore, I had a bad impression last time I went}'' (dislike), or ``\textit{I liked movie Z very much}'' (preference). We model such statements as two sets of items, $\P \subset \C$ (positive rating) and $\N \subset \C$ (negative rating).
\P and \N are part of the user model $U$ as mentioned above, so that our formalization is as general as possible.
%\francesco{the part below seems to be a diversion from the main direction of the paper; even if I wrote it originally, I propose to exclude it.}
%Note, however, that a recommender algorithm \Rec might use only part of all the information encoded in \Vector{Q,\K,\P,N}. For example, a recommender based on collaborative filtering might consider only \P and \N to return a ranked list of recommended items, without considering any preference on specific feature values contained in $Q$ and \K.
We exclude from the formalization of the conversation the trivial solution that directly proposes items in \P; such items are not proposed (\eg in the movie domain, they might be movies already seen; in the restaurants domain, they could be appreciated restaurants far away from the user's current position).

\subsection{States and Transformations}
A \emph{state} of a \crs is the 5-vector $U+\Rec(U)$. The conversation starts with $U_0=\Vector{Q_0,\K_0,\P_0,\N_0}$ as the initial information about the user, and $U_0+\Rec(U_0)$ as the initial state.
If there are no past item ratings available from previous interactions, we face a `cold start' situation where  $\P_0 = \N_0 = \emptyset$. When a new conversational session starts, we let
$Q_0 = \Vector{x_1, x_2, \ldots, x_p}$ (all variables) and $\K_0 = \Vector{\emptyset,\emptyset,\ldots,\emptyset}$ (all empty sets), \ie no
preference on individual features was expressed yet.
Recall that we assume the catalog \C as constant in time---\ie no items enter or exit the catalog during the conversation.

In our model, we are formalizing system-driven conversations, where a conversation consists of a sequence of interactions. Given a state $U+\Rec(U)$, the interactions initiated by the system can be one of the following.
\begin{enumerate}
    \item\label{slot-filling-interaction} the system asks the user to \emph{fill in} (provide) a value for a particular feature under-specified so far (a variable in $Q$);
    \item\label{contraction-interaction} the system asks the user to enlarge a too narrow choice for a feature value in $Q$;
    \item\label{revision-interaction} the systems asks for changing a feature value in $Q$;
    \item\label{ask-disliked-value} upon rejection of one or more items, the system can ask the user whether there is a specific feature of these item(s) she dislikes.
\end{enumerate}
The user can react to the above system prompts with one of the following interactions:
\begin{enumerate}[(a)]
    \item\label{reject-item} given one or more recommendations $\Rec(U)$, the user can accept one of them, or reject them;
    \item\label{dislike-value} the user can state that she dislikes every item whose feature $f$ is filled with a particular value $v$ (\eg ``\textit{I don't like green cellphones}'')
\end{enumerate}

While this set of possible interactions might seem limited, remember that it covers a wide range of conversation strategies that were described in the literature, as discussed in Section~\ref{sec:rel_work}.
Regarding the specific user reactions, Interaction~\ref{reject-item} amounts to adding the rejected item(s) to \N, while Interaction~\ref{dislike-value} amounts to adding the value $v$ to a set $C_i$ in the constraints vector \K.
Regarding system prompts, in what follows we show how such interactions can be expressed in terms of \emph{transformations} on $U$, which extend substitutions as follows: a transformation $\tau: T \rightarrow T$ can map any term to another term. In this way, substitutions are just a particular case of transformations. We need transformations to model system-user interactions in a more general way than just turning a variable into a value (as is the case for slot filling).

\subsubsection{Interaction~\ref{slot-filling-interaction}: Slot filling} Suppose the user issued a very general query $Q$ to start with, \eg ``\textit{I would like to go to a restaurant tonight}''. The system has so many answers to this query that presenting only the top-$k$ ranked items $\Rec(U)$ might miss a satisfying experience, and given that $f_{cuisine}(Q) = x$, it asks in turn: ``\textit{Which kind of cuisine do you prefer?}'' and provides a set of possible feature values of the available items. When this set is too large to be presented to the user, another variant could be that the system actively suggests some value, \eg ``\textit{What about French cuisine?}'' given that there are plenty of French restaurants in the catalog. The formal counterpart of this dialog is a simple substitution $\sigma$ such that $\sigma(x) = \mathit{French}$, expressing the user's choice, and after this simple transformation is performed, the new state is $U' + \Rec(U')$, where $U' = \Vector{\sigma(Q),\K,\P,\N}$, and $f_{cuisine}(\sigma(Q)) = \mathit{French}$.

\subsubsection{Interaction~\ref{contraction-interaction}: Slot unfilling} Here  transformations more general than substitutions come into play. Imagine a conversation between a restaurant recommender and a user, in which after some questions (slot fillings) regarding cuisine, location, price level, etc.\ the recommender asks the user: ``\textit{What about Restaurant $I_5$? Japanese cuisine, midtown, upper-level price, no dress code?}''; yet getting  a ``Dislike'' from the user. Suppose there are no more restaurants with the features set so far. Then the recommender may say ``\textit{May I propose you other types of cuisine?}''. Formally, $\N' = \N \cup\{I_5\}$ (since Item~$I_5$ is rejected), and the query $Q$ in which $f_{cuisine}(Q) = \mathit{Japanese}$, is transformed by $\tau(Japanese) = x$ (a fresh variable), yielding a new state $U'+\Rec(U')$ with $U'=\Vector{\tau(Q),\K,\P,\N'}$, so that $f_{cuisine}(\tau(Q)) = x$.
This interaction is a first convenient way of driving a conversation out of a dead end, without committing on a new choice.\footnote{This kind of transformation has also been called \emph{query relaxation} \cite{Mirzadeh2004,jannach2008finding}.}
A second way follows below.

\subsubsection{Interaction~\ref{revision-interaction}: Slot Change} This interaction is similar to the previous case, but now the system proposes a specific change to the user: ``\textit{I have no other Japanese restaurants in midtown, upper-level, with no dress code; but downtown, there are a few. Are you willing to change midtown for downtown? Other alternatives are: north outskirts, or riverside}''\footnote{Algorithms for computing such alternatives are proposed, \eg in \cite{Jannach2004,DBLP:conf/ijcai/FelfernigFSMMT09}.} Of course, if the user accepts an alternative, the feature should change value. The formalization is also similar to the previous case: the conversation is in a state $U+\{I_5\}$, and feature $f_{location}(I_5)$ contains the value $midtown$. Again, since the user dislikes $I_5$, it is now $\N' = \N \cup\{I_5\}$. We let $\tau(midtown) = downtown$ (or whatever other value is chosen by the user), stepping into the new state $U' + \Rec(U')$ with  $U'=\Vector{\tau(Q),\K,\P,\N'}$ so that $\Rec(U') \neq \emptyset$ (since there are recommendable Japanese restaurants downtown).

We note that in the area of Knowledge Representation, slot unfilling could be considered a special case of knowledge \emph{contraction} \cite{colu-etal-ECRA05}, while slot change is a form of \emph{revision} \cite{gard-TARK96}. We occasionally use such names instead of  slot unfilling and slot change in the rest of the paper.

The formalization described so far can be distilled into a definition of conversation as follows.
\begin{mydef}
A \emph{Conversation} is a sequence of transformations $\tau_1,\tau_2,\ldots,\tau_z$ that  starting from an initial state $U_0 + \Rec(U_0)$ ends in a final state of success (the user chooses an item), or failure.
\end{mydef}
We observe that failure may occur because \emph{(i)} either a maximum number of interactions has been reached, or \emph{(ii)} the user gives up after a number of interactions, or \emph{(iii)} the conversation reaches a state $U + \emptyset$ in which it happens that both the \crs has no more items satisfying the user's preferences, and the user does not want to reconsider any of the  preferences stated in $U$.

\section{Devising Optimal Strategies} \label{sec:devising-optimal}
%Devising optimal strategies
% subsection "strategies"
In the development of \crs with system-driven conversations, a cornerstone is the method  the \crs uses to decide how to proceed in the conversation; we call such a method a \emph{strategy}. Our goal in this section is to study the problem of developing efficient conversation strategies for \crs.
For our setting, a \crs \emph{strategy} is an algorithm that, in every possible state of a conversation, suggests the system its next action, among the ones formalized above.
As discussed in Section~\ref{sec:rel_work}, the efficiency of a \crs strategy can be studied in terms of the number of interactions the user needs to reach an item $I$. One can consider the worst case, \ie the maximum number of interactions (for some ``difficult'' item), or the average case, with respect to a probability estimating how likely item $I$ will be the chosen one. In this paper we study the minimization of the maximum number of interactions, leaving the probabilistic setting for future work.

To study conversation strategies, we need some reasonable assumptions about how the user answers the system prompts. In our work, we assume that there is a  set $\I$ of \emph{ideal} items the user would be satisfied with.
Without loss of generality, since we are studying efficiency, we assume that $\I \cap \C \neq \emptyset$, meaning that there always exist at least one item in the catalog that would satisfy the user, and the only point is how many interactions are needed to find it\footnote{This means that we do not consider here failure cases; in particular, we consider neither the case where there is a maximum number of interactions (Case~\emph{(i)} above), nor the case where the user gives up (Case~\emph{(ii)} above). Moreover, we consider no cutoff by the system (which would hide worst cases beyond that threshold).}.
Observe also that we do not set $\I \subset \C$, \ie we assume that there are ``ideal'' items the user would like, but which are not available in the catalog. They are exactly such items that make search difficult, since the user might drive the search (in the conversation) towards such unavailable items, heading towards available ones only later on.

%Observe that the realistic case is when $\frac{|\I \cap \C|}{|\C|} = \pi \ll 1$, \ie only very few items in \C would satisfy the user. In fact, consider the very simple strategy of proposing one item at random, and repeat until an item is chosen. From probability theory, the expected number of interactions for such a strategy is $\frac{|\C|+1}{|\I|+1} \approx \frac{1}{\pi}$.

%So from now on, we assume that $\pi \ll 1$, that is, the ideal items to be found in \C---if any---are very sparse in \C.

We assume that the user, during the conversation, adopts (even unintentionally) the following \emph{truthful attitude} when interacting with the \crs:
\begin{itemize}
    \item the user chooses a slot filling $\sigma(x) = v$ for a feature $f_i$,  \emph{if and only if} there is at least one ideal item $I \in \I$ whose feature $f_i$ is actually filled by $v$---in symbols, $\exists I\in \I : f_i(I) = v$. Note that this is a double implication, \ie if there is no such ideal item, the user does not choose (or accepts as a suggestion)  to fill the slot $f_i$ with value $v$;

    \item similarly for slot change: the user chooses (or accepts the suggestion) the change of a slot value $\tau(v) = w$ for a feature $f_i$,  \emph{if and only if} there is at least one ideal item $I \in \I$ such that  $f_i(I) = w$;

    \item user's like/dislike statements about whole items are always coherent with ideal items \I, \ie when the user states that she likes an item $I$, it is $I \in \I$, and  when the user states that she dislikes an item $I$, then $I \not\in \I$.
\end{itemize}

%We now introduce an important notion about user's choices. Intuitively, users can be more or less \emph{selective}: we now make this intuition more precise. We say that a user is \emph{most selective on \C} when there is no pair of items $I_1, I_2 \in \I \cap \C$ such that $I_1$ and $I_2$ share the same value for some feature $f$. Observe that this characteristic depends on the catalog \C; in the unrealistic case of a catalog containing an item for every possible combination of features, the only effective and efficient strategy for a most selective user would be to ask all $p$ features. However, in realistic cases, just a subset of features may select some item in \C (not the same features for every item, of course).

%\subsection{Efficient Conversational Recommenders and Decision Trees}
%\francesco{Maybe we can skip this title and go directly to "lower Bounds"}

\subsection{Lower Bounds}

We first show that our problem of devising efficient strategies for \crs includes Binary Decision Trees (BDT) minimization as a special case.
%A standard consequence of such an inclusion between problems is that every complete algorithm for devising an optimal strategy for \crs, could be used to minimize a BDT, hence, conversely, every lower bound on the complexity of BDT minimization (which is an NP-complete problem) is also a lower bound for computing optimal conversation strategies.
%
%
We recall the definition of a BDT \cite{Moret82} for our purposes. A BDTs is a procedural representation of a \emph{decision table} defined as a tabular representation of a function whose domain is contained in $B^p$ and whose range is the set of decisions. Given integers $p$, and $q \le 2^p$, a Decision Table is a $q \times p$ Boolean matrix\footnote{The use of the symbol $p$ (the same symbol used for the number of features in our formalization) is intentional here, as it will be clear later on.} in which the $p$ columns represent conditions/tests that can be \true or \false, and the $q$ rows represent the combinations of such conditions---one of which has to be assessed to take the appropriate decision. Each decision could be associated to one or more rows---all the combinations of conditions in which this decision has to be taken---and rows corresponding to impossible combinations are usually omitted. We give an example in Table~\ref{tab:example:DT:BDT} (left).
\begin{table}
\begin{tabular}{c|c}
    \begin{tabular}{cccc}
$x_1$   & $x_2$     & $x_3$ &  decision \\ \hline
\false  & \false    & \false    & $d_1$ \\
\false  & \false    & \true     & $d_1$ \\
\false  & \true     & \false    & $d_2$ \\
\false  & \true     & \true     & $d_3$ \\
\true   & \false    & \false    & $d_4$ \\
\true   & \true     & \false    & $d_4$ \\
\true   & \false     & \true    & $d_5$ \\
\true   & \true     & \true     & $d_5$
\end{tabular}
     &
\adjustbox{valign=m}{%
\begin{tikzpicture}
[
    level 1/.style={sibling distance=10em},
    level 2/.style={sibling distance=5em},
    level distance          = 4em,
    edge from parent/.style = {draw, -latex},
    every node/.style       = {font=\footnotesize},
    sloped
  ]
  \node {$x_1$?}
    child  { node {$x_2$?}
        child { node {$d_1$}
        edge from parent node [left, above] {\false}
        }
        child { node {$x_3$?}
            child { node {$d_2$}
            edge from parent node [left, above] {\false}
            }
            child { node {$d_3$}
            edge from parent node [right, above] {\true}
            }
        edge from parent node [right, above] {\true}
        }
    edge from parent node [left, above] {\false}
    }
    child { node {$x_3$?}
        child { node {$d_4$}
        edge from parent node [left, above] {\false}
        }
        child { node {$d_5$}
        edge from parent node [right, above] {\true}
        }
    edge from parent node [right, above] {\true} };
\end{tikzpicture}
}
\end{tabular}
\caption{A decision table (left) and a possible, optimized BDT representing it (right), whose depth is 3}\label{tab:example:DT:BDT}
\end{table}

A BDT is a strategy for testing the conditions that leads to a decision in every possible situation.
In particular, the tree contains a test in each internal node, a label \true/\false in each arc, and a decision in every leaf (possibly repeated). An example of BDT is given in Table~\ref{tab:example:DT:BDT} (right). The procedural interpretation of the BDT is: starting from the root, test the condition in the node, and proceed along the arc labeled with the result of the test. Repeat until a leaf is reached, which represents the decision to be taken. The \emph{cost} of the decision taken is the number of tests performed, which coincides with the length of the path from the root to that leaf. For a given table, there can be several BDT representing it, and BDT minimization is the problem of finding a BDT whose decision costs are all within a maximum, which coincides with the \emph{depth} of the tree (\ie the maximum path length from the root to a leaf in the tree).

The cost could also be averaged over all leaves, yielding a search for the minimum \emph{expected} cost.

The decision problem related to such optimization problem is: ``Given a decision table and an integer $M$, is there a BDT representation of the table whose decision cost is $\le M$?'' This problem was proved NP-complete\footnote{In the general case each test may have its own cost, and each combination of conditions may have a probability attached to it, but the problem is NP-complete even with uniform costs and probabilities, which is the case we are describing here.} for the minimum expected cost by
Hyafil and Rivest \cite{HyafilR76}, and then extended to minimum depth by Chikalov \etal \cite[Prop.~13]{chik-etal-DAM2016}.

To show that  BDT minimization is a particular form of the problem of devising optimal \crs strategies, given a decision table $q \times p$ we fix a catalog \C of $q$ items, each item with $p$ Boolean features.
Decisions are one-one with items, so the number of decisions is $|\C|$, and  conditions/tests  are one-one with  Boolean features. The \crs begins the conversation from a \emph{cold-start state} in which no preference is known yet. More precisely,
\begin{itemize}
    \item the query  contains only variables: $Q_0 = \Vector{x_1,x_2,\ldots,x_p}$;
    \item the constraints vector is a $p$-vector of empty sets $\K_0 = \Vector{\emptyset,\emptyset,\ldots,\emptyset}$;
    \item $\P_0 = \N_0 = \emptyset$.
\end{itemize}
Moreover, for our lower bound we suppose that the answers given by the user always lead to an item that exists in the catalog, that is, $\I \subseteq \C$, so that once the system eventually proposes an item (after a conversation in which the user has the truthful attitude explained above), the user immediately accepts it.
In this situation, the minimal interaction sequences are the ones in which only slot filling is performed, going from the root of the tree to a chosen item in a leaf.
Such conversations composed solely by slot fillings on Boolean features are equivalent to the paths of a BDT, hence a lower bound on the efficiency of such conversations is also a lower bound for minimizing general conversational strategies.
% to the literature on Decision Trees, we  concentrate on their \emph{average case}: is it possible to devise a strategy whose expected number of interactions (slot fillings) is less than or equal to a given bound?  Expected conversation length seems a better measure than maximum (worst-case) length, since we consider  strategies to be optimal also when few inefficient (\ie lengthy) conversations are outnumbered by many more efficient (\ie very brief) ones.

Borrowing from results about the complexity of finding optimal BDTs, we prove below that the problem of finding an efficient \crs strategy is  NP-hard in general, since it includes a decision problem which is NP-complete.

\begin{theorem}\label{thm:np-hardness}
The following problem is NP-hard: ``given an integer $M$,
%a specific user with its ideal items \I,
a catalog \C, an initial cold-start state $U_0 + \Rec(U_0)$, where $U_0 = \Vector{Q_0,\K_0,\P_0,\N_0}$, and $Q_0 = \Vector{x_1,x_2,\ldots,x_p}$, the constraints $\K_0$ are a $p$-vector of empty sets, and $\P_0 = \N_0 = \emptyset$,
is there a conversation strategy made only of slot filling questions, such that the maximum number of interactions with the user is less than or equal to $M$?''
\end{theorem}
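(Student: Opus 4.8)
The plan is to exhibit a polynomial-time many-one reduction from the minimum-depth BDT decision problem---shown NP-complete in \cite{chik-etal-DAM2016}---to the slot-filling strategy problem in the statement. The reduction is exactly the construction already sketched just before the theorem: from a decision table of size $q \times p$ I build a catalog \C of $q$ items over $p$ Boolean features, where the $j$-th item $I_j$ has $f_i(I_j)$ equal to the Boolean entry in row $j$, column $i$; decisions are put in one-to-one correspondence with items, the initial configuration is the cold-start state $U_0 + \Rec(U_0)$ described in the statement, and the bound $M$ is carried over unchanged. This mapping is clearly computable in time polynomial in the size of the table, so it remains only to prove that a depth-$\le M$ BDT for the table exists if and only if a slot-filling-only strategy using at most $M$ interactions exists.

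For correctness, the key observation is that, in the cold-start state and under the assumption $\I \subseteq \C$ made for the lower bound, a strategy asking only slot-filling questions \emph{is} a BDT over the Boolean features. Asking the user to fill feature $f_i$ plays the role of the test at an internal node, and by the truthful attitude the user's answer is forced to be \true or \false according to the ideal item she is heading toward; this yields precisely the two outgoing arcs of a BDT node. A root-to-leaf path then corresponds to a sequence of slot fillings that pins down a single catalog item, which the system proposes and the user---since $\I \subseteq \C$---immediately accepts. The number of interactions along that path equals its length, i.e. the number of tests performed, so the worst-case number of interactions of the strategy equals the depth of the associated tree.

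From here the two implications are short. Given a BDT of depth $\le M$ representing the table, reading each of its tests as the corresponding slot-filling question yields a strategy in which every ideal item is reached after at most $M$ interactions. Conversely, given a slot-filling-only strategy bounded by $M$, I may assume without loss of generality that no feature is queried twice along a path---re-querying a feature whose value is already determined wastes an interaction and can only increase the cost---so the strategy unfolds into a binary tree of tests whose leaves correctly identify each catalog item; this is a BDT representing the table, and its depth is at most the worst-case interaction count, hence $\le M$. Combined with the polynomiality of the construction, this establishes NP-hardness.

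I expect the main obstacle to be the careful bookkeeping in the equivalence between strategies and trees rather than any deep combinatorial difficulty. One must argue precisely that the truthful-attitude answers coincide with the Boolean branching of a BDT, that the terminal recommendation-and-acceptance step contributes nothing to the interaction count (or at most an additive constant that is irrelevant for NP-hardness), and that restricting to strategies that never re-query a settled feature is without loss of generality. Once these correspondences are nailed down, the reduction---and hence the hardness---follows immediately from \cite{chik-etal-DAM2016}.
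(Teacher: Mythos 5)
Your proposal is correct and follows essentially the same route as the paper's own proof: a polynomial reduction from minimum-depth BDT (items from rows/objects, Boolean features from tests/columns, cold-start state, same bound $M$), with the two directions established via the isomorphism between slot-filling strategies and BDTs. The only cosmetic difference is that the paper phrases the source problem in the Hyafil--Rivest form where each test selects exactly three objects (kept for later discussion), while you start from a general decision table; this does not change the argument.
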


\begin{proof}
We give a polynomial reduction from BDT minimization, that we precisely recall here: given a set of objects $O_1,\ldots,O_c$, and a set of tests $T_1,\ldots,T_p$, such that each test returns \true for exactly three objects\footnote{Such a requirement comes from the original reduction \cite{HyafilR76} from the NP-complete problem \textsc{exact-3-cover}. The proof extends to tests selecting more than three objects, but we keep the original special case for the discussion in Section~\ref{subsec:discussion-lower-bounds}.}, decide whether there is a  BDT whose internal nodes are the tests, whose leaves are the objects, and whose depth is less than or equal to a given length $M$.

Given an instance of the source problem as above, we construct the target problem in conversational recommendation. Let the catalog \C be made by $c$ items: for each object $O$, there is an item $I_O$.

We let the Boolean features be one-one with tests
\ie  $f_i = T_i$, for $i=1,\ldots,p$ while the items are defined as $I_O = \Vector{T_1(O),\ldots,T_p(O)}$
\ie each feature $f_i$ of an item $I_O$ has the same value as the one returned by Test $T_i$ on Object $O$.

We now prove that there exists a BDT solving an instance of the source problem, whose depth is less than or equal to $M$ if and only if there is an efficient strategy for the target \crs, whose maximum number of slot filling interactions is less than or equal to $M$.
Without loss of generality, we represent the strategy of the \crs as a binary tree, too.

``$\Rightarrow$'' Suppose there exists such a BDT. Then define an efficient strategy as follows:
take the BDT $T$, and substitute test labels $T_i$ in the internal nodes with feature labels $f_i$ whose filling should be asked to the user, leaving edge connections untouched (\true/\false test results in BDT are one-one with \true/\false user filling in the \crs strategy). Then substitute each leaf node of the BDT, marked with object $O$, with a leaf marked by item $I_O$.

``$\Leftarrow$'' Suppose now that there exists an efficient \crs strategy for the target problem defined above---with $p$ Boolean features and $c$ items---whose maximum number of slot filling questions is within $M$.
Such a strategy defines a  BDT whose maximum path length is $M$ as follows: transform the binary tree representing the strategy by substituting each internal-node feature label $f_i$ with test $T_i$, and each leaf-node item label $I_O$ with object $O$. Clearly such a BDT is isomorphic to the strategy, hence its depth is $M$.
\end{proof}

The above theorem highlights the difficulty in optimizing \crs efficiency in the case in which the only transformation is slot filling; one may wonder how representative of the general case such ``simple'' case is, and whether interactions with slot unfilling/change could be more efficient than slot filling, after all.
However, the following  theorem shows that the other transformations are just ``deroutes'' from the shortest interaction paths; in particular, we show that  a conversation of length $n$, containing all kinds of transformation---\ie slot filling/unfilling/change---can always be transformed into a conversation of length $m$ containing only slot filling, whose length $m$ is no worse than the initial one $n$---\ie $m \le n$.
This shows that the search for shortest transformations can be limited to the ones involving only slot filling.

\begin{theorem}\label{thm:only-slot-filling}
    For every sequence of interactions from $U_0 + \Rec(U_0)$ to a success state $U_n + \{I\}$, where $I$ is accepted by the user, there is another sequence of interactions from a state $U_0' + \Rec(U_0')$ to a state $U_m' + \{I\}$, whose transformations are only substitutions (slot fillings), and whose number of interactions $m$ is not larger than the initial one, \ie $m \le n$.
\end{theorem}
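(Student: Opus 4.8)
The plan is to read off the \emph{final} query $Q_n$ of the given conversation and to realize the same successful outcome by a single pass of slot fillings that installs exactly the values occurring in $Q_n$, starting from a state $U_0'$ that carries the all-variables query together with the item-rating and constraint context $\K_n,\P_n,\N_n$ accumulated in the original run. Two facts drive everything. First, since $I$ is accepted, the truthful attitude forces $I\in\I$; hence for every feature $f_i$ the value $f_i(I)$ is an admissible filling, as $I$ itself witnesses $\exists I'\in\I : f_i(I')=f_i(I)$. Second, since $I\in\Rec(U_n)$ the item $I$ matches $Q_n$, so on every \emph{filled} slot $i$ (one not holding a variable) we have $f_i(Q_n)=f_i(I)$, while the remaining slots of $Q_n$ hold variables.

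First I would set $F=\{\, i : f_i(Q_n)\text{ is a value}\,\}$ and let the new conversation perform, for each $i\in F$ in any order, the simple substitution $\sigma_i(x_i)=f_i(I)$. Each such filling is truthful by the first fact. It is also coherent with $\K_n$: matching of $I$ against $Q_n$ is witnessed by a substitution $\sigma$ coherent with $\K_n$, and coherence demands $\sigma(f_i(Q_n))\notin C_i$ for every $i$; since on a filled slot $f_i(Q_n)=f_i(I)$ is already a value, this reads $f_i(I)\notin C_i$. After the $|F|$ fillings the query equals $Q_n$ up to renaming of the untouched variables, so the reached state is $U_m'=U_n$; consequently $\Rec(U_m')=\Rec(U_n)\ni I$ and the truthful user accepts $I$, giving the success state $U_m'+\{I\}$.

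It remains to show $m\le n$, which I would argue slot by slot. For each slot $i$ let $a_i$ be the number of transformations---fillings, unfillings, or changes---acting on slot $i$ in the original run, and $b_i$ the number in the new run. By construction $b_i=1$ if $i\in F$ and $b_i=0$ otherwise. Conversely, a session starts from the all-variables query, so any slot that ends \emph{filled} in $Q_n$ must have been acted upon at least once, whence $a_i\ge 1$ for every $i\in F$. Therefore $b_i\le a_i$ for all $i$, and summing over slots gives $m=\sum_i b_i=|F|\le\sum_i a_i\le n$, the last inequality holding because the query transformations form a subset of all $n$ interactions.

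The step I expect to be the main obstacle is not the counting---which merely formalizes the intuition that unfilling or changing a slot can never shorten the route to a prescribed final value---but guaranteeing that the simplified run still succeeds on the \emph{same} item $I$. Here the two facts are indispensable: $I\in\I$ is precisely what lets the truthful user accept each direct filling, and carrying the context $\K_n,\P_n,\N_n$ into $U_0'$ is precisely what makes $\Rec$ return $I$ in the final state. The coherence of the direct fillings is by contrast immediate, since $\K$ is held fixed at $\K_n$ throughout the new run and the matching substitution already certified $f_i(I)\notin C_i$ for every $i$; in particular the fillings may be issued in any order.
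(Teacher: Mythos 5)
Your construction is correct in outline but takes a genuinely different route from the paper's. The paper performs local surgery on the given sequence: for each contraction $\tau(v)=x$ it either deletes the earlier interaction that (erroneously) filled the slot with $v$ together with the later unfilling, or, if $v$ was already present in $Q_0$, starts instead from a $Q_0'$ with that slot unfilled; slot changes are treated analogously, and the excisions are iterated until only substitutions remain, each step shortening the sequence by $1$ or $2$. You instead read off the final query $Q_n$ and replay it in a single pass of $|F|$ fillings from an all-variables query, pre-loading $\K_n,\P_n,\N_n$ into $U_0'$ so that the terminal state coincides with $U_n$ and $\Rec$ necessarily returns $I$. Your route buys a cleaner guarantee that the shortened run still succeeds on the \emph{same} item (the paper is silent on how the intermediate proposals and rejections survive its excisions), at the price of a much more drastic change to the initial state: the paper only unfills some slots of $Q_0$, whereas you also import the entire rejection history $\N_n$ and the constraints $\K_n$. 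Both are admissible under the theorem's loose quantification over $U_0'$, but yours makes the resulting run look less like a repaired version of the original dialog.

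The one step that does not go through as written is the inequality $a_i\ge 1$ for $i\in F$. You justify it by ``a session starts from the all-variables query'', but the theorem is stated for an arbitrary initial state, and the paper's own proof explicitly treats the case $f_i(Q_0)=v$, \ie slots already filled at the start. If $f_i(Q_0)=f_i(I)$ and the original run never touches slot $i$, then $i\in F$ with $a_i=0$ while your run spends an interaction filling it, so $\sum_i b_i\le\sum_i a_i$ fails. The repair is easy: let $Q_0'$ agree with $Q_n$ on every slot that already holds a value in $Q_0$, and issue fillings only for those slots of $F$ that are variables in $Q_0$; each such slot went from variable to value in the original run and was therefore acted on at least once there. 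With that adjustment your argument is sound.
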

\begin{proof}
If the interaction is already composed by substitutions only, the claim is trivially true; suppose then that there are also slot unfilling and slot change interactions. For every contraction $\sigma(v) = x$, we focus on the state in which the feature contained $v$. If $f_i(Q_0) = v$, \ie the value was in the initial query $Q_0$, let $Q_0' = \tau(Q_0)$ where $\tau(v) = x$, that is, we start with the variable unfilled from the very beginning. If instead the filling happened in an intermediate interaction, we just remove the interaction in which $f_i$ was (erroneously) filled by $v$. Of course, we eliminate also the subsequent slot unfilling interaction. In both cases, the new sequence of interactions is shorter (by 1 or by 2) than $n$. We proceed in a similar way also for slot changing, and repeat the steps until no more such steps are performed. In the end, we obtain an interaction sequence that starts from $U_0' + \Rec(U_0')$ (with $U_0$ possibly different from $U_0$), ends with the same satisfying item at $U_m' + \{I\}$, where $m \le n$.
\end{proof}

The above theorem tells us that the shortest possible conversations include the ones in which only slot filling is performed. Such conversations are the lucky ones, because there is no later revision of the slot fillings made. The goal of the \crs in this case is just to ask the least number of slot fillings leading to one item to propose, and stop. Hence, the above theorem evidences the significance of our NP-hardness reduction for \crs in general. We now make some points about this result.

\subsubsection{Discussion on the lower bound}\label{subsec:discussion-lower-bounds}
We stress that we are not interested in the above proof by itself,
but in how it enriches the understanding of the computational problem of efficiency in \crs, thanks to the following observations:
\begin{enumerate}
    \item The problem is NP-hard for the case in which each \true feature selects three items. This means that NP-hardness does not depend on \emph{feature entropy}, that would be maximal when a feature selects half the catalog. Moreover, the strategy that selects the feature with maximum entropy is only a heuristic, which does not guarantee optimality.
    Note also that since the original reduction \cite{HyafilR76} is from \textsc{exact-3-cover}, and the more general problem \textsc{exact-cover} is NP-hard too, our problem is NP-hard when a feature selects \emph{at least} three items.

    %This includes situations in which some features are hierarchically structured, \eg a feature about the director's nationality (``Do you like French directors?'') that is implied by a more specific feature (``Do you like Truffaut?''). The hard problem remains for features that are orthogonal, in the sense that they do not imply each other.

    \item Hardness is instead entangled with catalog \emph{incompleteness}, that is, the fact that not all combinations of feature values correspond to an available item; for (although unrealistic) complete catalogs---catalogs in which there is an item for every possible combination of feature values---optimal decision trees can be constructed in time polynomial in the catalog size \cite{guij-etal-IPL99}.

    \item The proof does not mention how the tree is represented, only the decision whether there exists such a tree or not. Hence, NP-hardness holds also in the case in which the tree is represented implicitly, as a circuit (or a neural network) which, given as input a state of the conversation, suggests the next feature to fill (usually called a \emph{succinct} representation \cite{papa-yann-86} of a tree).
    %A different problem is to study how the size of the tree grows with the size of the problem (items, features). Such a problem has been proven non-approximable, as well as the problem of learning such trees \cite{hancock-IC-96}.

    \item Observe that there are at most ${{c}\choose{3}}$ possible different tests, hence $p \in O(n^3)$---\ie the number of features is bounded by a (low) polynomial in $c$. This means that NP-hardness shows up even when the number of features $p$ and the number of items are polynomially related.

    \item The proof does not rely on the existence of more than one user; hence the problem is NP-hard even if we want to devise an efficient strategy only for one particular user (or, equivalently, for non-personalized \crs).

    \item The complexity analysis confirms the intuition  that cold start is a worst case for \crs. When some features in $Q$ contain a value, or \P and \N are not empty, this prunes some parts of the decision tree. Of course, the problem remains NP-hard in the size of the remaining features and items. Many solutions have been proposed to alleviate such a cold-start problem, however, as in many applications new entries are very common (new-user and new-item problems), cold-start situations seem unavoidable, and algorithms that implement efficient strategies should cope with such situations.

    \item Theorems~\ref{thm:np-hardness} and~\ref{thm:only-slot-filling} together show that, although the conversation is composed by actions of slot filling, slot unfilling and slot change, \emph{the problem of devising an efficient strategy is not a problem of optimal planning} of the order of such actions. Slot unfilling and slot change are fundamental for another dimension of \crs evaluation, namely, \emph{effectiveness}: when slot filling comes to a dead end (an item rejection), only the other two actions may steer  the conversation out of that dead end, and lead to an acceptable alternative recommendation.
\end{enumerate}

%%%%%%%%%%%%%%%%%%%%%%%%%%%%%%%%%%%%%%%%%%%%%%%%%%%%%%%%%%%%%%%%%%%%%%%%%%%%%%%%%%%%%%%%%%%%%%%%%%%%%%%%%%
\subsection{Upper Bounds}\label{sec:upper-bounds}

We now study the upper bound of the complexity of devising efficient strategies for \crs. Since generally feature domains contain several values, we represent slot filling strategies by (general) Decision Trees (DT), which are trees whose internal nodes can have more than two outgoing edges (one edge for each possible feature value).
Rephrasing the definition of BDTs of the previous section, a DT for driving the conversation of a \crs contains a feature $f$ in each internal node, a label with a value $v$ filling $f$ in each arc outgoing from the node, and an item in every leaf.
An example of a catalog and a possible slot filling strategy for the conversation (represented as a DT) are given in Table~\ref{tab:example:DT}.

\begin{table}[htb]
\begin{tabular}{c|c}
\footnotesize
    \begin{tabular}{llll}
\textit{director}    & \textit{starring}  & \textit{genre}     & movie        \\ \hline
\textit{Spielberg}   & \textit{Hanks}     & \textit{historical} & \textit{Forrest Gump} \\
\textit{Spielberg}   & \textit{Dreyfuss}  & \textit{action}    & \textit{Jaws}         \\
\textit{Eastwood}    & \textit{Hanks}     & \textit{action}    & \textit{Sully}
\end{tabular}
\normalsize
     &
\adjustbox{valign=m}{%
\begin{tikzpicture}
[
    level 1/.style={sibling distance=5em},
    level 2/.style={sibling distance=5em},
    level distance          = 5em,
    edge from parent/.style = {draw, -latex},
    every node/.style       = {font=\scriptsize},
    sloped
  ]
\node {director?}
    child { node {starring?}
        child { node {Forrest Gump}
        edge from parent node [left, above] {Hanks}
        }
        child { node {Jaws}
        edge from parent node [right, above] {Dreyfuss}
        }
    edge from parent node [left, above] {Spielberg}
    }
    child { node {Sully}
    edge from parent node [right, above] {Eastwood}
    }
;
\end{tikzpicture}
}
\end{tabular}
\caption{A catalog (left) and a possible, optimized, conversational strategy (right) represented as a DT. Note that two features (out of three) are sufficient to distinguish all items.}\label{tab:example:DT}
\end{table}

The strategy represented by the DT is: starting from the root, ask the user to fill the feature in the node with a preferred value, and proceed along the arc labeled with the user's answer. Repeat until a leaf is reached, which identifies the item to propose. The \emph{maximum cost} of such a part of the conversation is the maximum number of slot fillings asked to the user---\ie  the depth of the tree.
We recall that the problem of finding an optimal decision tree (in terms of minimum depth) is trivially NP-complete: it is NP-hard, because it contains BDT optimization (see previous section) as a special case, and it belongs to NP because the DT representing the strategy is polynomial in the size of the catalog. Hence a nondeterministic Turing Machine %(TM)
can compute such an optimal DT in polynomial time, making nondeterministic choices about which feature labels each internal node.

Observe that at any given point of the conversation, the focus is restricted to a set of \textbf{recommendable items} $\S \subseteq \C$ compatible with the answers given by the user so far; for example, if a user tells the system that one of her most favorite directors is \textit{Spielberg}, the system concentrates on Spielberg's movies (the set \S) excluding (for the moment) the others. In such a set of items, $f_{\mathit{director}} = \mathit{Spielberg}$,
and each other feature $f_i$ may contain only a subset of all possible values in its domain $D_i$---only values compatible with Spielberg's movies.
Continuing the example, for the items in \S the feature $f_{\mathit{starring}}$ may contain a value among \textit{Hanks, Duvall, Cruise, etc.}, but it never contains \textit{De Sica}, because there are no movies starring \textit{De Sica} whose director is Spielberg. We call such compatible values \emph{Active Values} for $f_i$ in \S, written as $AV(\S,f_i)$. More formally,
$AV(\S,f_i) = \{ v \in D_i ~|~ \exists I \in \S : v = f_i(I) \}$.
In the previous example, once the user's answers restrict the movies to the set \S of the movies directed by Spielberg, $AV(\S,f_{\mathit{starring}}) = \{ \mathit{Hanks, Duvall, Cruise} , \ldots\} $. Active values will be useful both in the rest of this theoretical section, and in the next section regarding experiments.

An algorithm computing a DT for the catalog \C could be used in \crs for a strategy using \emph{only} slot filling.
However, a single DT would be an incomplete strategy for \crs, since it does not consider the possibility of a reject action by the user, and consequently, slot unfilling and slot change.
On the other hand, observe that the possible actions of the \crs cannot be mixed in any possible way: infinite loops---such as a slot filling followed immediately by an unfilling of the same slot---must be avoided. Hence, in what follows we make a reasonable restriction on possible strategies: every slot unfilling (contraction) or slot change (revision) can be performed only if, immediately before them, at least one item has been rejected by the user. Such an item can therefore be added to the set \N the user dislikes, and will not be proposed again. Recalling that the \crs can propose only items in $\C - \N$,
this incrementality in \N mirrors a reduction of the items to propose, and leads to a trivial upper bound on the possible strategies: each item could be proposed at most once.
After all possible items in the catalog have been proposed and rejected, the conversation must stop. We call such strategies \emph{well founded}, in analogy with well-founded sets, and we restrict ourselves to them in the analysis below.

Intuitively, deciding whether or not there exists a well-founded strategy for a catalog \C, which ends in at most $M$ interactions, is a problem that can be solved by exploring recursively all possible transformations, and all active values for a feature in a slot filling or slot changing transformation. However complex, such an exhaustive exploration needs only polynomial space (the size of the catalog times the size of feature domains), and would prove that the problem of devising an optimal strategy is in PSPACE. We prove this intuition by first exhibiting Algorithm~\ref{algo-upper-bound} below, and then analyzing its space complexity.

\renewcommand{\algorithmicrequire}{\textbf{Input:}} % in order to write "input" and "output" instead of "requires" and "ensure"
\renewcommand{\algorithmicensure}{\textbf{Output:}}

\begin{theorem}[Correctness]\label{thm:correctness}
Given as input a catalog \C of items, an integer $M$, and an initial state $U_0 + R(U_0)$, Algorithm~\ref{algo-upper-bound} returns \true if and only if there exists a well-founded strategy for the \crs which, starting from $U_0 + R(U_0)$, ends in at most $M$ interactions.
\end{theorem}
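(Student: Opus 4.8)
The plan is to prove the two implications of the biconditional separately, in both cases by induction along a single termination measure that makes the recursion of Algorithm~\ref{algo-upper-bound} well-founded. The natural measure is the lexicographic pair $(|\C|-|\N|,\,M)$: every rejection strictly enlarges \N and hence strictly decreases $|\C|-|\N|$, while every slot filling, unfilling, or change consumes one unit of the residual interaction budget $M$. Since well-founded strategies propose each item at most once, this measure is well-ordered and bounds the depth of any accepting run. Conceptually it helps to read the search as an AND--OR tree: the \crs \emph{chooses} an action (an OR node over system moves and, for slot filling/change, over the feature queried), whereas the truthful user's possible answers \emph{branch} (an AND node over the active values $AV(\S,f)$, each consistent with some admissible ideal set \I). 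The whole proof is then a faithful translation between such AND--OR trees and the recursion of the algorithm.

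For soundness (the direction ``the algorithm returns \true $\Rightarrow$ a well-founded strategy of depth $\le M$ exists''), I would read the strategy directly off the accepting sub-tree of the recursion, by induction on the measure. In the base case the algorithm accepts because a single recommendable item remains and can be proposed and accepted, which is a strategy leaf. In the inductive step the algorithm returned \true because some system action succeeds on \emph{all} admissible user responses within the residual budget; I place that very action at the corresponding strategy node and attach as children the sub-strategies supplied, via the induction hypothesis, by the recursive calls on each response branch. Because the algorithm branches over $AV(\S,f)$, the node has exactly one child per possible truthful answer, so the tree is a legitimate strategy; because the budget decreases on every interaction and the \N-measure is respected, it is well founded and of depth at most $M$.

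For completeness (``a well-founded strategy $S$ of depth $\le M$ exists $\Rightarrow$ the algorithm returns \true''), I would exploit that Algorithm~\ref{algo-upper-bound} is \emph{exhaustive}: it enumerates every system action and, for slot filling/change, every active value. Hence the particular action prescribed by $S$ at its root is among the branches the algorithm inspects, and walking $S$ and the recursion in lockstep, each child state of $S$ has strictly smaller measure, so by the induction hypothesis the recursive call it induces returns \true. Consequently the root call returns \true as well. The truthful-attitude assumption is exactly what makes the two branchings coincide: the set of answers $S$ must cover is precisely the set of active values the algorithm enumerates, so no branch of $S$ is missed and no spurious branch can force a failure.

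The main obstacle I anticipate is handling the transformations \emph{beyond} slot filling uniformly. Slot filling cleanly restricts \S to the active-value subset and decrements $M$, but slot unfilling and slot change need not shrink \S at all, so neither $M$ nor $|\S|$ alone can serve as the induction parameter. The well-foundedness restriction is what rescues the argument: since unfilling/change is allowed only immediately after a rejection, every such excursion is preceded by a strict drop in $|\C|-|\N|$, which is precisely why the lexicographic measure, rather than the budget by itself, is the correct parameter. A secondary point of care is to check that the algorithm's ``propose-and-accept'' leaves match the model's success condition---an accepted item $I\in\I$ with $\Rec(U)\cap\N=\emptyset$---so that accepting recursion leaves correspond exactly to model success states and vice versa; otherwise the leaf-to-leaf correspondence underpinning both inductions would break.
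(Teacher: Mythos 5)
Your proposal takes essentially the same route as the paper's proof: a structural induction on the recursion that, for soundness, reads the strategy off the accepting run by placing the $(\exists)$-chosen action at each node and attaching the sub-strategies from the recursive calls over all user answers, and, for completeness, uses the exhaustiveness of the enumeration (the paper states this direction as the contrapositive, but the content is identical). The only divergence is cosmetic: the paper inducts on $M$ alone—every recursive call passes $M-1$ or $M-2$, so your concern that $M$ cannot serve as the induction parameter and that the lexicographic pair is needed is unfounded, though your refined measure is also valid.
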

\begin{proof}
First of all, observe that Algorithm~\ref{algo-upper-bound} terminates, since in every recursive call either the number $M$ decreases, or the set \N of disliked items enlarges, or both, leading necessarily to a base case.
Once termination is proved, correctness of Algorithm~\ref{algo-upper-bound} can be proved by induction on $M$. For the base cases $M \le 0$ (Line~\ref{alg:base-case-1}), $|\C-\N| \le M$ (Line~\ref{alg:base-case-2}), and user acceptance (Line~\ref{alg:base-case-3}), the result is obviously correct (see also comments aside base cases). Suppose now that the algorithm is correct for any state and interaction bound less than $M$.

To ease the proof presentation, note that the recursive part of Algorithm~\ref{algo-upper-bound} can be divided in two branches: first one spans Lines~\ref{alg:branch1-begin}--\ref{alg:branch1-end}, while second one spans Lines~\ref{alg:branch2-begin}--\ref{alg:branch2-end}. We refer to these two branches in the rest of the proof.
\vskip 1\baselineskip
\noindent
``$\Rightarrow$'' Suppose the algorithm terminates with \true on input $U+R(U)$ and $M$. Either this result comes from the first branch, or from the second one; for both branches, in the lines marked with $(\exists)$, the algorithm chose a feature $f$---and possibly a transformation in case of the first branch---such that for any possible value chosen by the user (in the lines marked with $(\forall)$), all recursive calls\footnote{Strictly speaking, when the transformation is a slot unfilling, there is no choice by the user, and just one recursive call returning \true. In this case the inductive hypothesis on $U'+R(U'), M-1$ directly proves that there is a strategy also for $U+R(U), M$.} terminate with \true on input $U'+R(U')$ (where $U'$ depends on the value chosen by the user) and $m \in \{ M-1, M-2\}$. By inductive hypothesis, each recursive call ensures the existence of a well-founded strategy for the \crs which, given $U'+R(U')$, ends in at most $m$ interactions.
Then, the overall strategy on input $U+R(U), M$ is the one that makes exactly the choice made in  $(\exists)$, followed by the strategies defined by the recursive calls.
\vskip 1\baselineskip
\noindent
``$\Leftarrow$'' Suppose the algorithm terminates with \false. This means that in either branch, for any possible choice (feature, transformation) made in $(\exists)$-lines, there is at least one possible choice (feature value $v$) the user can make in ($\forall$)-lines that makes at least one recursive call end with \false on input $U'+R(U'), m\in \{ M-1, M-2\}$.
By inductive hypothesis, this means that when the user chooses $v$, there is no strategy starting from $U'+R(U')$ that ends in at most $m$ interactions. Hence there is no strategy starting from $U+R(U)$ and ending in $M$ interactions, either.
\end{proof}

\begin{theorem}\label{thm:upper-bound}
Given as input a catalog \C of items---with $p$ features and at most $K$ values in each feature domain---an integer $M$, and an initial cold-start state $U_0 + R(U_0)$, Algorithm~\ref{algo-upper-bound} uses polynomial  space.
\end{theorem}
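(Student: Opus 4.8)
The plan is to bound two quantities separately---(i) the depth of the recursion explored by Algorithm~\ref{algo-upper-bound}, and (ii) the space occupied by a single activation record---and then to argue that a depth-first traversal of the implicit alternating (AND--OR) search tree keeps only one root-to-node path live at any moment, so that the total working space is the product of these two polynomial quantities. This is the standard route to a PSPACE-style bound, specialized to the concrete state representation of our model.

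First I would bound the recursion depth. Reusing the termination argument already given in the proof of Theorem~\ref{thm:correctness}, every recursive call is issued with a bound $m\le M-1$ (indeed $m\in\{M-1,M-2\}$), so $M$ strictly decreases along every branch and a base case is reached after at most $M$ levels; equivalently, the rejected set \N never shrinks and grows whenever an item is rejected, and $\N\subseteq\C$. Crucially, without loss of generality $M<|\C|$: if $M\ge|\C|$ then at the cold-start state $|\C-\N|=|\C|\le M$, the base case on Line~\ref{alg:base-case-2} fires immediately, and the algorithm returns \true without recursing. Hence the recursion depth is $O(|\C|)$---polynomial in the size of the input even though $M$ is encoded in binary.

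Next I would bound the space of one frame. A frame stores the current state $U+R(U)$, namely the query $Q$ ($p$ terms), the constraints \K ($p$ sets whose members are disliked catalog values, hence at most $|\C|$ values per feature), the item sets \P and \N (subsets of \C recorded by index), and the top-$k$ list $R(U)$; each of these is polynomial in $p$, $|\C|$, and the per-value encoding length $\log K$. Beyond the state, a frame needs only the loop counters that enumerate the existential choices (a feature $f$ among $p$, plus, in the first branch, one of a constant number of transformation types) and the universal choices (a value among the active values $AV(\S,f)$, of which there are at most $|\S|\le|\C|$); these counters cost $O(\log p+\log|\C|+\log K)$ bits. The recommendable set \S and its active values can be recomputed from $U$ and \C on demand rather than stored, but even storing them is polynomial. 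Thus each frame uses $\mathrm{poly}(p,|\C|,\log K)$ space.

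Finally I would assemble the bound, and this is where the essential argument lies. Each $(\exists)$-line is realized by a loop that tries features/transformations one at a time and returns \true on the first success, while each $(\forall)$-line is a loop over active values that returns \false on the first failure; in both cases the space consumed while exploring one alternative is released before the next alternative is attempted, so sibling subtrees never coexist in memory. Consequently only the $O(|\C|)$ frames lying on the single current root-to-node path are simultaneously live, and the total space is $O(|\C|)\cdot\mathrm{poly}(p,|\C|,\log K)=\mathrm{poly}(p,|\C|,\log K)$, i.e.\ polynomial in the input size. I expect the only delicate point to be the treatment of the binary-encoded bound $M$: one must explicitly invoke the base case of Line~\ref{alg:base-case-2} to rule out an exponential recursion depth, after which the depth-first AND--OR argument---the same one underlying the classical fact that evaluating a game tree lies in PSPACE---goes through routinely.
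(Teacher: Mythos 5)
Your proof is correct, and its overall skeleton---bounding the recursion depth, bounding the space of one activation record, and multiplying because depth-first exploration of the AND--OR tree keeps only one root-to-leaf path live---is the same as the paper's. Where you genuinely diverge is in the depth bound, and your route is arguably cleaner. The paper bounds the stack height by $p\cdot|\C|$ structurally: at most $p$ consecutive slot-filling calls can occur before a single item is isolated and the first branch is entered, and the first branch can be entered at most $|\C|$ times because each entry adds an item to $\N$; this makes visible exactly where well-foundedness is used. You instead observe that the bound parameter decreases by at least one in every recursive call, so the depth is at most $M$, and that $M<|\C|$ may be assumed without loss of generality because otherwise the base case on Line~\ref{alg:base-case-2} fires at the root; this yields the slightly tighter depth bound $O(|\C|)$ and, more importantly, explicitly disposes of the worry that $M$ is encoded in binary and could force exponential depth---a point the paper's proof does not address at all (it never uses $M$ in its depth argument, which is why it doesn't need to). Your per-frame accounting is also more conservative than the paper's: you charge each frame for the full state $U+\Rec(U)$ and the loop counters, whereas the paper charges only the two enumeration counters, $O((\log p)\cdot(\log K))$ bits; both yield a polynomial product, but your version is the safer bookkeeping if one insists that each call owns its own copy of the transformed state. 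In short: same theorem, same two-factor decomposition, but a different and complementary argument for the crucial depth bound---the paper's exposes the role of the rejection mechanism, yours exposes the role of base case~2 in taming the numeric parameter.
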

\begin{proof}
As in the previous proof, we refer to Lines~\ref{alg:branch1-begin}--\ref{alg:branch1-end} as ``first branch'', while ``second branch'' refers to Lines~\ref{alg:branch2-begin}--\ref{alg:branch2-end}.

First of all, a single recursive call needs a counter of $\lceil \log p \rceil$ bits for enumerating features (either Line~\ref{alg:enum-features1} or Line~\ref{alg:enum-features2}, depending on which branch) and a counter of $\lceil \log K \rceil$ bits for enumerating values in the chosen feature (lines immediately below the above ones).
Hence, the space of a single recursive call is $O((\log p) \cdot (\log K))$.

A bound on the height of the call stack can be computed as follows: at most $p$ nested calls can be made in the second branch---filling all $p$ features in $Q$---after which a single item has been isolated, and the first branch must be taken. Each time the first branch is taken, at least one item is added to \N (in the worst case), reducing the set of items $\C-\N$ that can be proposed. Clearly, at most $|\C|$ calls can recur into the first branch. Overall, the recursion stack height is bounded by $p\cdot |\C|$, so the total space occupied by the recursion stack is $O(p\cdot |\C| \cdot (\log p) \cdot (\log K))$.

\begin{algorithm}[]
\scriptsize
\caption{ExploreStrategies($\C,U,M$)}\label{algo-upper-bound}
\begin{algorithmic}[1]
\Require catalog \C (fixed), conversation state $U = \Vector{Q,\K,\P,\N}$, integer $M$
\Ensure Boolean
$ \left\{\begin{array}{ll}
   \true  & \mbox{if there is a strategy whose number of interactions is } \le M \\
   \false  & \mbox{otherwise}
\end{array} \right.$
\If{$M \le 0$}\label{alg:base-case-1}
    \Return \false \myComment{Base case 1: there is no strategy with 0 interactions}
\Else
\If{$|\C-\N| \le M$}\label{alg:base-case-2}
    \Return \true \myComment{Base case 2: trivial strategy, propose each item}
\Else[$|\C-\N| > M > 0$]
\If{$Q(\C-\N) = \{I\}$ } \myComment{$Q$ selects a singleton}
    \State\label{alg:branch1-begin} propose I to the user
    \If{$I$ is accepted}\label{alg:base-case-3}
        \Return \true \myComment{item found within $M$ interactions}
    \Else[$I$ is rejected]
        \State let $\N' \gets \N \cup\{I\}$
    \State \emph{``the user rejects $I$ for a specific reason (value)?''}
    \If{$\exists f_i$ s.t.\ $f_i(I)=v$ \textbf{and} the user dislikes $v$}
    \label{alg:feature-value-reject}
        \State let $C_i' \gets C_i \cup \{v\}$ \myComment{$v$ is added to disliked values}
        \State let $\K' \gets \Vector{C_1,\ldots,C_i',\ldots,C_p}$
        \State $\N' \gets \N' \cup \{I' ~|~ v = f_i(I') \}$
            \myComment{add items with $v$ to disliked items}
    \Else
        \State $\K' \gets \K$
    \EndIf \myComment{no more items to propose at this point, change $Q$}
    \State\label{alg:enum-features1} $(\exists)$ choose a feature $f_i$ s.t.\ $f_i(Q) = v$ ($f_i$ was filled in $Q$)
    \State $(\exists)$ choose $\tau \in \{$ Slot Unfilling (SU), Slot Change (SC)$\}$
    \If{$\tau$ is SU}\label{alg:SU}
        \State let $\tau(v) = x$ where $x \in X$ is a new variable
        \State \Return ExploreStrategies($\C,\Vector{\tau(Q),\K',\P,\N'}, M-1$)
        \State \myComment{$-1$ interaction: the rejection}
        \Else[$\tau$ is SC] \myComment{try all values but $v$ and the disliked ones}
            \State $(\forall)$ let $tryAllSlotChanges \gets \true$
            \ForAll{$v'\in D_i$ s.t.\ $v' \not\in C_i \cup \{v\}$}
                \State let $\tau(v) = v'$
                \If{$\tau(Q)(\C-\N)\neq \emptyset$} \myComment{try only changes selecting some item}
                \If{
                \parbox[t]{.45\textwidth}{ExploreStrategies($\C,\Vector{\tau(Q),\K',\P,\N'}, M-2$)}}
                \myComment{$-2$ interactions: rejection, and change value)}
                \State returns \false
            \State $tryAllSlotChanges \gets \false$
            \myComment{user might choose a value leading to $>M$ interactions}
           \EndIf\EndIf
        \EndFor
        \State \Return $tryAllSlotChanges$ \myComment{\true only if any user's change leads to $\le M$ interactions }
    \EndIf\EndIf\label{alg:branch1-end}
    \Else[$Q(C)$ selects more than one item]
    \label{alg:branch2-begin}
        \myComment{ask the user to fill a slot that distinguishes items}
        \State\label{alg:enum-features2} $(\exists)$ choose a feature $f_i$ s.t.\ $f_i(Q)=x \in X$
        \State $(\forall)$ let $tryAllSlotFillers \gets \true$
        \ForAll{$v \in AV(Q(\C),f_i)$}
        \myComment{try all active values occurring in items of $Q(\C)$}
        \State let $\tau(x)=v$
        \If{\parbox[t]{.45\textwidth}{ExploreStrategies($\C,\Vector{\tau(Q),\K,\P,\N}, M-1$)\\ returns \false}}
             \myComment{one interaction less (slot filling)}
            \State $tryAllSlotFillers \gets \false$
            \myComment{user might fill a value leading to $>M$ interactions}
        \EndIf
        \EndFor
        \State \Return $tryAllSlotFillers$
        \myComment{\true only if any user's fill leads to $\le M$ interactions }
    \EndIf\label{alg:branch2-end}
\EndIf
\EndIf
\end{algorithmic}
\end{algorithm}

Observe now that an item with $p$ features (each feature containing one value among $K$ possible ones), needs $p \cdot\lceil \log K \rceil$ bits of encoding, hence the input of a catalog of $|\C|$ items has size $n = |\C| \cdot p \cdot\lceil \log K \rceil$. With respect to such an input size, the stack space is $O(n \cdot \log p)$. Now from the product expressing $n$, it follows that $\log p < \log n$, hence $O(n\cdot\log p) \subseteq O(n\cdot\log n)$. This proves that Algorithm~\ref{algo-upper-bound} uses polynomial space.
\end{proof}

Combining Theorems~\ref{thm:correctness} and~\ref{thm:upper-bound}, one obtains that the problem of deciding whether there exists a strategy for \crs terminating in a given number of interactions can be solved in Nondeterministic Polynomial Space (NPSPACE), and since NPSPACE = PSPACE \cite{savi-JCSS70}, the problem belongs to PSPACE, too. Recall that $PSPACE \subseteq EXPTIME$, and in fact, it can be easily seen that Algorithm~\ref{algo-upper-bound} runs in exponential time in the worst case.

\subsection{Improving the upper bound}\label{sec:imp_upB}

The above lower and upper bounds restrict the complexity of finding efficient strategies for \crs between NP (lower bound) and PSPACE (upper bound).
Since the aim of this paper is to bridge \crs practical efficiency with theoretical results, we leave investigation on how to close this gap to future (more theoretically-oriented) research.
On the practical side, our PSPACE upper bound ensures that usual techniques for solving NP-complete problems, and in particular, finding optimal DT---\eg Mixed-Integer Optimization \cite{bert-dunn-ML17}, branch-and-bound search \cite{agli-etal-AAAI2020}, SAT and MAXSAT encoding \cite{naro-etal-IJCAI18}---can be used for devising optimal conversational strategies, but only---as far as our result proves---as steps inside an overall recursive  procedure,
while our NP lower bound ensures that employing such techniques is not an overshoot.

To improve the upper bound, remark that when the user rejects a particular feature value---\eg ``\textit{I don't like green phones}'', as taken into account in Line~\ref{alg:feature-value-reject}---this fact rules out all items sharing the same feature value, thus improving the worst case of the strategy. Let us call Protocol P1 the protocol in which the \crs does not ask for disliked values when an item is rejected (\ie  Algorithm~\ref{algo-upper-bound} never enters Line~\ref{alg:feature-value-reject}), and call Protocol P2 the one in which the \crs \emph{always} asks for a disliked value when the user rejects an item, the user provides such a value, and the \crs discards all items sharing the same feature value.

Intuitively, it seems that P2 is always more efficient than P1. We prove in what follows that, in fact, the difference between P1 and P2 carries over to an improvement in the upper bound of the efficiency problem, backing up intuition, but highlighting when such intuition fails.
For this part, we recall $\theta$-notation: for a function $f(n)$, we write  $f(n) \in \theta(g(n))$ if both $f(n) \in O(g(n))$, and $f(n) \in \Omega(g(n))$, \ie $f(n)$ grows no faster than $g(n)$, but also there are infinitely many (worst) cases in which $f(n)$ grows as fast as $g(n)$. Recall that the relation ``$\in\theta$'' between two functions is symmetric and transitive\footnote{Simmetry: $f(n) \in \theta(g(n)) \Longleftrightarrow g(n) \in \theta(f(n))$. Transitivity: $f(n) \in \theta(g(n)), g(n) \in \theta(h(n)) \Rightarrow f(n) \in \theta(h(n))$}.
\begin{corollary}\label{corollary:polylogspace}
Let the given data as in Theorem~\ref{thm:upper-bound}, and additionally, suppose the number of values of all feature domains be bounded by a constant $K$, and $|\C| \in \theta(K^p)$. Then deciding whether there is a well-founded strategy always terminating within $M$ interactions, that applies Protocol P2, is a problem that belongs to POLYLOGSPACE.
\end{corollary}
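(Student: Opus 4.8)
The plan is to reuse Algorithm~\ref{algo-upper-bound} exactly as analyzed in Theorem~\ref{thm:upper-bound}, but to exploit Protocol P2 to obtain a much shorter call stack, and then to translate the resulting $\mathrm{poly}(p)$ space bound into a polylogarithmic one using the completeness hypothesis $|\C|\in\theta(K^p)$. First I would fix the input size. As in Theorem~\ref{thm:upper-bound}, an item occupies $p\cdot\lceil\log K\rceil$ bits, so $n=|\C|\cdot p\cdot\lceil\log K\rceil$. Since $K$ is constant and $|\C|\in\theta(K^p)$, we get $\log n=\theta(p)$, \ie $p=\theta(\log n)$. Hence it suffices to show that, under Protocol P2, Algorithm~\ref{algo-upper-bound} runs in space $\mathrm{poly}(p)$; in fact I aim for $O(p\log p)$, which is $O(\log n\cdot\log\log n)\subseteq O((\log n)^2)$ and therefore lies in POLYLOGSPACE.

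The crux is bounding the height of the recursion stack; recall that in Theorem~\ref{thm:upper-bound} this height carried a factor $|\C|$, arising because each item can be rejected (and added to \N) once. The key observation is that under Protocol P2 every rejection, \ie every entry into the first branch, executes Line~\ref{alg:feature-value-reject} and \emph{permanently} adds a fresh disliked value to some $C_i$: the proposed item $I\in Q(\C-\N)$ matches $Q$ through a substitution coherent with \K, so $f_i(I)\notin C_i$ just before the ban. Since \K only grows along any root-to-leaf chain and each domain has at most $K$ values, the number of first-branch steps along a single chain is at most $\sum_{i=1}^{p}|C_i|\le pK$. This is exactly what collapses the $|\C|$ factor down to $pK$.

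It then remains to bound the slot-filling (second-branch) steps along a chain, which is slightly delicate because a feature can be unfilled by a slot-unfilling (SU) step and later refilled. I would track the potential $\phi=$ number of filled features in $Q$, which starts at $0$ in the cold-start state, stays in $[0,p]$, increases by $1$ on a slot filling, decreases by $1$ on an SU step, and is unchanged by a slot change. Hence the number of slot fillings is at most (number of SU steps)$+p$, and since each SU step lives inside a first-branch step, the slot fillings are bounded by $pK+p$. Summing, the chain length, and thus the stack height, is $(pK+p)+pK=O(pK)=O(p)$ because $K$ is constant. As in Theorem~\ref{thm:upper-bound}, each stack frame stores only the two enumeration counters (chosen feature and chosen value), using $O(\log p\cdot\log K)=O(\log p)$ bits, while the full state $U$ is never materialized but reconstructed from the choices recorded down the stack; in particular \N, whose size may be $\theta(K^p)$, is never stored explicitly. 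Multiplying height by per-frame space gives $O(p\log p)$, which by the conversion above is polylogarithmic in $n$.

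I expect the recursion-depth bound to be the main obstacle, and within it the refilling issue: without Protocol P2 the number of rejections is controlled only by $|\C|$, so the whole improvement hinges on the fact that P2 forces a permanent, distinct value-ban at every rejection, together with the argument that unfilling-then-refilling cannot happen more often than SU steps allow. I would also stress the precise role of completeness: the depth bound $O(pK)$ needs only Protocol P2 and the constant bound $K$ and holds for arbitrary catalogs; the hypothesis $|\C|\in\theta(K^p)$ is used solely to guarantee $p=\theta(\log n)$, which is what turns $O(p\log p)$ space into genuinely polylogarithmic space. For a catalog polynomially small in $p$ the same algorithm would still use $O(p\log p)$ space, but this would no longer be polylogarithmic in the (now much smaller) input.
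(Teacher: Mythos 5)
Your proposal is correct and follows the same overall route as the paper's proof: run Algorithm~\ref{algo-upper-bound} under Protocol P2, observe that every entry into the rejection branch permanently bans a fresh value in some $C_i$ so that the number of rejections along any chain is $O(pK)$, and then use $|\C|\in\theta(K^p)$ with $K$ constant to get $p\in\theta(\log n)$ and hence polylogarithmic space. The differences are worth noting. First, your potential-function argument ($\phi=$ number of filled slots, so \#fillings $\le$ \#SU $+\,p \le pK+p$) yields a recursion depth of $O(pK)$, whereas the paper bounds the depth more coarsely by $p\cdot p(K-1)$ (at most $p$ consecutive slot fillings between any two rejections, times at most $p(K-1)$ rejections), arriving at $O(p^2\log p)$ space rather than your $O(p\log p)$; your bound is tighter but both are polynomial in $p$ and hence polylogarithmic in $n$, so nothing is lost either way. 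Second, the paper treats the $(\exists)$ choices as genuine nondeterminism, concludes membership in $NSPACE(\log^3 n)$, and then invokes Savitch's theorem to land in $DSPACE(\log^6 n)\subset$ POLYLOGSPACE; you instead implicitly resolve the $(\exists)$ choices by deterministic exhaustive enumeration via the counters stored in each frame, which makes the $O(p\log p)$ bound directly a deterministic-space bound and renders Savitch unnecessary. Since the algorithm as written does enumerate its choices with counters, your accounting is legitimate and in fact gives a sharper final bound ($O(\log n\cdot\log\log n)$ versus $O(\log^6 n)$); you should just state explicitly that the existential branching is simulated deterministically, since POLYLOGSPACE is a deterministic class and this is the one point where the paper felt obliged to add an extra step.
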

\begin{proof}
We prove membership in POLYLOGSPACE using the same Algorithm~\ref{algo-upper-bound} above, plus the fact that in every iteration of the algorithm the condition in Line~\ref{alg:feature-value-reject} is satisfied, as required by Protocol P2.

Observe that $|\C|  \le K^p$ since there cannot be more items than all possible $p$ combinations of feature values. After $K-1$ rejections in Line~\ref{alg:feature-value-reject}, each ruling out a value for a feature, the number of remaining items $|\C - \N |$ is bounded by $K^{(p-1)}$ (all but one possible values for a feature have been eliminated, hence that feature does not count in the exponent). When the values ruled out $v_{i_1}, v_{i_2},\ldots$ belong each one to a different feature $f_{i_1}, f_{i_2}, \ldots$, the bound  is a more complex
product\footnote{For example, if the disliked values were 2 for the first feature, 5 for the third one, and 3 for the last feature, the expression would be the product of $p$ factors $(K-2)\cdot K \cdot (K-5) \cdot K \cdots K \cdot (K-3)$. } than $K^{(p-1)}$, but in any case, after at most $p\cdot (K-1)$ rejections and disliked values, all values but one have been ruled out for each feature domain. At that point, either the user accepts the last item---the one whose features contain the last remaining value in each domain---or the user rejects it, and in any case the conversation concludes. This means that now the stack height of Algorithm~\ref{algo-upper-bound} is no more bounded by $p\cdot |\C|$ (where the factor $|\C|$ was the bound on possible rejections) as in the proof of Theorem~\ref{thm:upper-bound}, but is bounded by $p \cdot p(K-1) = p^2(K-1)$, and the space needed by the call stack is $O(p^2(K-1) \cdot (\log p) \cdot (\log K)) = O(p^2 \log p)$ for constant $K$.
We now prove that the above expression implies that space requirements drop to a polynomial in the \emph{logarithm} of the input.

Observe that $|\C| \in \theta(K^p)$ implies by symmetry $K^p \in \theta(|\C|)$, that is, $p \in \theta(\log |\C|)$ when $K$ is a constant. Hence, by transitivity the number of nested recursive calls is in $\theta(\log^2 |\C|)$, and the space needed by Algorithm~\ref{algo-upper-bound} drops to $\theta((\log^2 |\C|) (\log\log |\C|)) \subset O(\log^3 |\C|)$.
The size of the input is $n = |\C| \cdot p \cdot\lceil \log K \rceil$, which belongs to $\theta(|\C| \cdot \log |\C|)$ in the hypotheses of this theorem, hence $\log n \in \theta(\log |\C|+\log\log|\C|) = \theta(\log |\C|)$.
Summing up, the space needed by nondeterministic Algorithm~\ref{algo-upper-bound} is in $O(\log^3 n)$, which, by Savitch's theorem \cite{savi-JCSS70}, implies that the problem complexity now drops to $DSPACE(\log^6 n)$ (problems solvable by a deterministic algorithm that uses $O(\log^6 n)$ space), which is strictly included in POLYLOGSPACE.
\end{proof}

However complex the class $DSPACE(\log^6 n)$ might seem, observe that the upper bound was exponentially reduced with respect to PSPACE. Moreover, from the well-known relation $POLYLOGSPACE \subseteq DTIME(2^{\lceil \log n \rceil ^{O(1)}})$, (problems solvable by a deterministic algorithm whose time is bounded by a subexponential function, a class which is also known as \emph{Quasi-Polynomial time (QP)}\ ), we note that also time upper bounds provably improve over the previous EXPTIME ones\footnote{Observe that the general problem remains NP-hard, though, because the additional hypotheses of this corollary are not met by the reduction in Theorem~\ref{thm:np-hardness}.}.

Hence, the above result backs up theoretically \crs  implementations whose strategies enforce Protocol~P2.
However, observe that the above result holds only in the situation in which the input catalog has a number of items that grows as fast as $K^p$---\ie items are ``dense'' in the space of features---and $K$ is a constant, hence the input increases because more features $f_{p+1}, f_{p+2},\ldots$ are added---\ie more domains $D_{p+1}, D_{p+2}, \ldots$ are added---not because each domain $D_i$ of each feature value increases its cardinality. If, instead of considering $K$ as a constant, we would consider $p $ as a constant, the improvement on the upper bound due to Protocol~P2 would not theoretically show up. In other words, when the number $p$ of features is fixed, and the catalog increases because more possible values are added to each feature, asking to rule out such \emph{values} one by one (as in Protocol~P2) is not provably more efficient than asking to rule out \emph{items} one by one (as in Protocol~P1).

Hence, our theoretical analysis points out which characteristics of the catalog must be evaluated when implementing efficient  strategies for a \crs: if the catalog is composed by items with few features, and lots of possible values in each feature, there is not yet any theoretical evidence that Protocol~P2 yields a tangible increase in efficiency with respect to Protocol~P1; if instead the catalog consists of items described by lots of features, each feature with  few values, then~P2 improves efficiency over~P1 in a complexity-theoretical provable way.
In the next section, we provide an evidence of such a difference in a practical setting. Namely, we set up two catalogs IS1, IS2---both derived from real data on movie recommendation---with the characteristics described above, and simulate conversations with users---whose preferences are taken from real user preferences---testing both protocols. The results confirm the theoretical expectations, namely, that the additional steps of Protocol~P2 really pay off only for the second catalog, IS2.

\section{A complexity-driven experiment with two protocols and two catalogs} \label{sec:experiment}

Based on the results of the previous section, we devise an \textit{in-vitro} experiment that confirms the intuition in Corollary~\ref{corollary:polylogspace}.
More specifically, we demonstrate that the two alternatives in Algorithm~\ref{algo-upper-bound}---that is, entering or not  Line~\ref{alg:feature-value-reject}---lead to different results in terms of the efficiency of a \crs, but depending on the characteristics of the catalog. We recall the two protocols below, then we make our hypotheses explicit, and explain in detail the setting of the experiment. Subsequently, we present some possible conversations based on the protocols, we discuss the outcomes (confirming the hypotheses) and their limitations.

\subsection{Protocols}
A user can take one of two distinct actions when she rejects a recommendation, and each action corresponds to one of the protocols P1, P2, below:
\begin{itemize}
    \item[P1] - the user rejects the recommendation and the CRS does not ask the user to provide a specific reason, \ie a reason that refers to a disliked feature value. Examples of such more unspecific feedback---if any feedback is given at all---could be, ``\textit{I don't want to go to the Green Smoke restaurant}'' (maybe because a friend of mine reported privately to me a bad impression), or ``\textit{I don't want to see the movie \textit{American Beauty}}'' (because the title sounds strange to me, but I cannot explain this to a system);
    \item[P2] - the user rejects the recommendation and the CRS asks for a specific item characteristic (\ie feature value) she does not like at all. For example, green color for cellphones, sea-view restaurants, a particular movie director, etc. We assume that a user will truthfully answer such a question when asked.
\end{itemize}
We will describe the details and assumptions of how we simulate the conversations below.

\subsection{Research Hypotheses}
We formulate two hypotheses for our experiment aiming to confirm the theoretical analysis provided at the end of Section~\ref{sec:imp_upB}.
%\dietmar{Maybe it is more intuitive to have H2 first, and then H1}

\begin{itemize}
    \item[H1] We do \emph{not} expect a strong difference in terms of efficiency between P1 and P2 when the items in the catalog have few features with a large number of distinct values.
    \item[H2] We \emph{do} expect a strong difference in terms of efficiency between P1 and P2 when the items in the catalog have several features with few distinct values (\ie IS2). %\lucio{Hypothesis formulation should be revised}
\end{itemize}
%\dietmar{The hypotheses says that we expect huge/strong differences. It would be better if we could report statistically significant differences. To do this, we could repeat the experiment, e.g., 10 times, with randomized target items and then measure difference between the mean results for both protocols. }

Basically, the only difference between H1 and H2 is the item catalog, which is the independent variable in our experiment.
%Accordingly, what we want to prove is that the difference between P1 and P2 is progressively more negligible when most items in the dataset have distinct feature values. \dietmar{I don't understand how this hypothesis was derived. Why should it something be progressively more negligible? And how would we measure this progression?}
%Hypothesis H1 has been evaluated on itemset IS1, whereas hypothesis H2 on itemset IS2. \dietmar{Feels a bit disconneced.}

\subsection{Experimental setting}
%\francesco{Looking at the overall structure, it seemed to me that this sub-heading was missing.}
We designed an offline %(in-vitro)
experiment that simulates the two above-mentioned interaction strategies (\ie protocols) within a \emph{system-driven} model: the \crs asks, the user answers. Examples of corresponding dialogs are outlined in Section \ref{sec:running_ex}.

\subsubsection{Efficiency Metric}
In order to evaluate interaction efficiency we count the number of questions (\emph{NQ)} the \crs asks before the user accepts a recommendation. A smaller number of required questions indicates higher interaction efficiency. This metric, as mentioned above, is commonly used in the CRS literature \cite{iovine2020conversational}.
%This metric has been adopted in other work for evaluating \crs performances \cite{iovine2020conversational}.
%Furthermore, we analyzed the overall execution time required for completing the experiments.
%\dietmar{Not sure we need computation time. If so, we should explain what we expect here. (and also rename the subsection to ``Metrics''.}

\subsubsection{Dataset and Catalog Description}
In our experiments, we rely on the MovieLens-1M dataset, which is widely used in recommender-system research, and which contains movie ratings by a larger user community\footnote{\url{https://grouplens.org/datasets/movielens/}}. The statistics of the dataset are given in Table \ref{tab:dataset}.
%In our experiment, for each user we left ratings greater or equal to her average rating. \dietmar{Why, this needs an explanation. Is the rest discarded? This point is picked up later, do we need this information here?}
Unfortunately, in this dataset the items do not have textual features associated to them. Therefore, we mapped the MovieLens dataset with DBpedia resources to obtain side information for each item in the catalog. To close this gap we follow the approach of Anelli \etal \cite{DBLP:conf/semweb/AnelliNSRT19}.
\begin{table}[!ht]
\centering
\caption{Dataset characteristics}\label{tab:dataset}
\begin{tabular}{
	l |
	r |
	r |
	r |
	r 
}
\toprule
%{Dataset} & {users} & {items} & {ratings} & {content features} & {distinct values}\\
%\midrule
%Movielens-1M & 6,040 & 3,706 & 1,000,209 & - & -\\ 
%Movielens-1MC (with content) & 6,040 & 3,308 & 511,260 & 100 & 144,941 \\ 
{Dataset} & {users} & {items} & {ratings} & {content features} \\
\midrule
Movielens-1M & 6,040 & 3,706 & 1,000,209 & - \\ 
Movielens-1MC (with content) & 6,040 & 3,308 & 511,260 & 279  \\

\bottomrule
\end{tabular}
\end{table}

After this processing, since not all the items have a mapping to the knowledge graph, the number of items became 3,308, and the total number of ratings after the mapping is 511,260. Through the mapping, 279 features were introduced into the dataset. Each item has one or more of these features expressed by a triple: $\langle$\textit{Item}, \textit{Feature}, \textit{Value}$\rangle$. As an example, consider the movie \textit{The Hateful Eight} directed by \textit{Quentin Tarantino}. For this movie, we have the following triple: $\langle$\textit{The Hateful Eight}, \textit{director}, \textit{Quentin Tarantino}$\rangle$.
We denoted this data set as \emph{MovieLens-1MC - with content} in Table \ref{tab:dataset}. %\dietmar{Mention how many features there exist in the dataset; give examples. - Claudio}\claudio{Ok!}
%\francesco{I do not understand "on the average". If it is a total number, it is not an average. Or not?}

%In order to prove our hypotheses, namely H1 - there is no difference in terms of efficiency between P1 and P2 \textit{when the items in the catalog have few features with a large number of distinct values}, and H2 - there is significant difference in terms of efficiency between P1 and P2 \textit{when the items in the catalog have several features with few distinct values}, we built two different itemsets from that dataset. \dietmar{Repeat the hypothesis of the experiment here. In case one, we expect X, in case two, we expect Y} \dietmar{Also, we did not only build subsets of features, but the feature domains were also manipulated}
In order to prove our hypotheses, we built two different itemsets from MovieLens-1MC:
\begin{itemize}
\item \emph{Itemset1 (IS1)} has only a few features, but with a larger number of distinct values and is designed to prove H1 (we do not expect a strong difference in terms of efficiency between P1 and P2 \textit{when the items in the catalog have few features with a large number of distinct values}).
\item \emph{Itemset2 (IS2)}, in contrast, has a larger number of features, but each of them only has a few distinct values and is designed to prove H2 (we expect a strong difference in terms of efficiency between P1 and P2 \textit{when the items in the catalog have several features with few distinct values}).

\end{itemize}
Specifically, to make the datasets sufficiently different and that reflect the characteristics described in our research hypotheses, IS1 and IS2 have 4 and 10 features respectively (selected from the 279 content features of Movielens-1MC) for each item. For each feature of IS1 there are from 1,500 to 2,500 distinct values, whereas there are about 100 for IS2.
%Since it is difficult to quantify ``few'' or ``many'' in a practical context, we tried to set 4 features and 1,500--2,500 distinct values for IS1, and 10 features and $\sim$100 distinct values for IS2.
The characteristics of the two itemsets are reported in Table~\ref{tab:dataset_statistics}.% We report the global number of distinct values the features can assume. %\dietmar{Table should be improved; it is not directly obvious what the numbers stand for; maybe add to caption.- Claudio}\claudio{Ok!}
\begin{table}[!h]
\centering
\caption{Itemset characteristics. Each value indicates the number of distinct value belonging to each features}\label{tab:dataset_statistics}
\resizebox{\textwidth}{!}{\begin{tabular}{
	l |
	r |
	r |
	r |
	r |
	r |
	r |
	r |
	r |
	r |
	r
}
\toprule
 & {director} & {starring} & {producer} & {writer} & {distributor} & {musicComposer} & {language} & {narrator} & {basedOn} & {country}\\
\midrule
Itemset1 & 1542 & 2293 & 1587 & 1981 & - & - & - & - & - & -\\ 
Itemset2 & 100 & 100 & 100 & 100 & 99 & 100 & 99 & 100 & 100 & 65\\ 
\bottomrule
\end{tabular}}
\end{table}

In order to focus on the main goal of our experiment---to provide evidence that the efficiency of different strategies depends on itemset characteristics---we make the same assumptions as in Section \ref{sec:formalization} that \emph{(i)} values for all features exist and \emph{(ii)} that all features have singleton values. Accordingly, we replaced \texttt{null} values with a value randomly chosen from the set of possible values for that feature, and set-valued features (\eg the movie cast) with a single value randomly chosen among them. Note that for the purpose of our experiment, this replacement does not effect the final result. %  (\eg the cast composed of only one actor).

\subsubsection{Experimental Protocol---Details}
%\dietmar{Tried o revise this.}
%\tommaso{clarify that at the end we can have more than one item}
In our experiment, we simulate conversations between a user and a \crs where this latter drives the interaction by asking questions to the user about preferred item features and by making recommendations for items. We assume that the simulated user has certain pre-existing preferences regarding item features, and truthfully responds to the system's questions about these preferences. When provided with a recommendation, the user either rejects it, which means that the dialog continues, or accepts it and the dialog ends. The \crs in our simulation implements one of the described conversation strategies, P1 or P2. Remember that---depending on the catalog characteristics---we expect to see differences in terms of how many interaction steps are necessary before the dialog successfully ends.

For the sake of the simulation, we assume that \emph{in each conversation} there is exactly one distinguished item $\hat{I}$ in the catalog $C$ that the user will accept, which we call \emph{test} item or \emph{ideal} item. This item $\hat{I}$ matches all preferences of the user (even if it is not the only item in $C$ that matches the pre-existing general user preferences). Remember here the pre-existing user preferences consist of a number of desired value for features. With respect to the \emph{genre} feature, a user might for example have a preference for \emph{action} and \emph{romantic} movies. When asked by the system for a particular genre preference, the user might therefore state one of the two values. During the conversation, the recommender will only consider such a \emph{stated preference} when retrieving suitable items \S (as defined in Section \ref{sec:upper-bounds}).

In our experiment, we aim to simulate a realistic situation where users do not know before the recommendation process which item (movie) they want \emph{exactly}. Translated to our setting, this means that the user does not know about $\hat{I}$ or its features \emph{in advance}, but is able to tell that she accepts it once it is presented to her based on the stated preferences. As a result, the conversation with the system might often reach a dead end, \ie where the system does not recommend $\hat{I}$ and the user therefore has to revise their \emph{stated} preferences. In our example, the user might have stated to prefer \emph{action} movies when asked by the \crs, even if the ideal item $\hat{I}$ actually is a romantic movie. Hence, the user has to revise the stated preference at some stage so that the ideal item can be recommended by the \crs.\footnote{Note that the chosen protocol will by design lead to a high number of required interactions, in particular as make the specific assumption for our simulation that only one item is acceptable in each dialog.}
Please note that in our experiment we simulate a user cold-start situation, \ie we are not taking any long-term user profile into account the during the dialog.
%\tommaso{Added the following sentence to clarify also the comment at the end of the section.}
Without loss of generality, to avoid the introduction of further notation, in this section we will use $f_i$ to refer to the feature selected at the $i$-th step of the dialog. We also remember that during the dialog, features are selected randomly.

In order to determine the pre-existing user preferences and the ideal items based, we select a set of positively-rated items (PRI) for each user. This set consists of those items that the user has rated with a value that is greater or equal to their average rating. We use this set PRI for two purposes. First, we simulate a dialog for each element $I$ of PRI as an ideal item. Second, we use the items in PRI to determine the pre-existing preferences of a user and simulate their answers to the questions posed by the system. The user preferences on the feature $f_i$ are then encoded in the set $UP_i = \bigcup \{ v = f_i(I) \mid I \in \text{PRI}\}$ (for \textit{U}ser \textit{P}references). Therefore, if the user previously liked \textit{action} and \textit{romantic} movies, the set of pre-existing preferences for that feature contain exactly these two values.
Remember that since we want the dialog to be successful only in case it ends with the acceptance of $\hat{I}$, for each conversation the catalog consists of $\mathcal{C}' = (\mathcal{C} - \text{PRI}) \cup \hat{I}$. We remove all the items in PRI but $\hat{I}$ from the catalog $\mathcal{C}$ since we do not want to recommend items the user has already seen before. 
%. Indeed, in case we leave them in $\mathcal{C}$, it may happen that, at the end of a dialog, the system recommend items in $PRI \setminus I$ and the simulated user reject them since they do not contain $I$. Actually, leaving them within $\mathcal{C}$ does not change in a remarkable way the simulation protocol nor the obtained results.}\tommaso{we do not want to recommend items the user has already seen before

%To that purpose, when we simulate a conversation that has the ideal item $I$, we remove $I$ from PRI and create the union of values for each feature of items in $PRI\setminus I$. \dietmar{Can we be sure that $I$ can be found based on the preferences? By removing I from PRI the set of active values might be reduced too much, can it?}

When the simulated dialog with a defined ideal item $\hat{I}$ starts, the system will ask a question on a feature \eg ``\textit{What is your favorite genre?}''. The simulated user will then respond by choosing a value from the set $UP_i \cap AV(\mathcal{S},f_i)$, where $AV$ is the set of \emph{Active Values} (see Section \ref{sec:upper-bounds}). In our example, $f_i = genre$.
By definition, this set of values therefore changes during the interaction as it is obtained through the intersection between all the pre-existing preferences for that feature ($UP_i$) and the possible values in the set of recommendable items $\mathcal{S}$. 
As a consequence, in our simulation the user cannot answer with a value that is not present in any recommendable object in $\mathcal{S}$.
%This assure that the protocol will always be able to reach $I$.

After each user answer, the set of recommendable items $\mathcal{S}$ is updated by the \crs. If the user, for example, answered to prefer action movies, all movies with a different genre are not recommendable anymore at the next stage of the dialog and are then removed from $\mathcal{S}$. We remember that the user does not know about the ideal item $I$ in this session nor its features.
Therefore, the choice of the user answer $v \in UP_i \cap AV(\mathcal{S},f_i)$ is done in a randomized manner in our experiment. This also avoids the introduction of any bias or any optimization.

A recommendation is shown when the system has no more questions to ask. This may happen in two cases:
\begin{enumerate}[label=\alph*)]
    \item The system has asked the user $p$ questions, with $p$ being the number of features;
    \item After the user's answer to the $q$-th question, with $q < p$, we have  $|\mathcal{S}| = 1$. In this case, the system recommends the only item in $\mathcal{S}$ without asking any explicit preference for the features $f_i$ ---with $q < i \leq p$--- to the user. In other words, the systems does not try to elicit preferences for the remaining features that have not been asked yet.
\end{enumerate}
The user rejects the recommendation when $I$ is not present in the list of recommended items.
If the recommendation is rejected, the recommended items are removed from the catalog $\mathcal{C'}$ and the system starts again posing questions to the user. It is noteworthy that, since we may have more than one item in $\mathcal{C'}$ described by the feature values selected by the user during the dialog, the final recommendation may contain also a set of items.

In case of rejection P1 and P2 behave differently. In case of P1, the system starts again from a first randomly selected feature and discards all the values previously selected for each $f_i$ by the user during the previous interactions. In protocol P2, in contrast, the user declares one of the feature values she \emph{dislikes} in the recommended items. Once the user declares she dislikes a value $v$ for a feature $f_i$, the system keeps all the previous choices in the sequence of features $f_1,\ldots, f_{i-1}$ for the current dialog and starts asking questions for $f_i$ and the following ones $f_{i+1}, \ldots, f_p$. 
We cannot keep the value previously selected in the sequence of features $f_{i+1}, \ldots, f_p$ for the current dialog since after the user selects a new value $v' \in UP_i \cap AV(\mathcal{S},f_i)$ for $f_i$, the set $\mathcal{S}$ changes and, as a consequence, the same happens to $UP_{i+1} \cap AV(\mathcal{S},f_{i+1})$ for $f_{i+1}$ (and the remaining features). 
Regarding feature $f_i$, the user is allowed to select values from $UP_i \cap AV(\mathcal{S},f_i) - v$. We observe that protocol P2 always enters Line~\ref{alg:feature-value-reject} in Algorithm~\ref{algo-upper-bound}.

%Such a \emph{dislike} happens in one of two situations: (1) the user had not chosen a value for that feature because the CRS did not ask for it; otherwise; (2) the user changes her mind on a liked value because it leads to unwanted items.

%When the user declares she dislikes a given feature value, all the other previously stated preferences from the ongoing conversation are kept. Note that a protocol P2 therefore always enters Line~\ref{alg:feature-value-reject} in Algorithm~\ref{algo-upper-bound}.

On the other side, when the recommendation succeeds ---when the ideal item $I$ is in the list of recommendations--- the dialog is successfully ended and the simulation continues with a new dialog for another user and/or target item. The simulation ends when a dialog was simulated for each element in PRI for every user.

\subsection{Running Examples}\label{sec:running_ex}
We show the main differences between the two protocols defined above using two fictitious dialogs.
%In the running examples, a user (U) interacts with the system (S). We assume that the user is always in a cold start condition: no preferences are stored in her profile between one interaction and the next. \dietmar{what does ``between one interaction and the next'' mean? Why wouldn't we store stated preferences? Also removed ``generic''.}
%\dietmar{I removed stuff about cold start here. This was confusing. Also: the essence of the difference between the two dialogs should be highlighted (and probably repeated after the dialogs). In example one, P1 is followed, in which \dots; in example two .... Maybe the dialogs could also be put inside boxes, and not written as regular text. They are not essential for the understanding I think.}

% This generic user is not a cold start user because we have the previous interactions between her and the items consumed.
%\francesco{I changed most of the "what", "who", etc. because they imply that there is only one. }
%\francesco{To really follow what we do, we should always give the user a real list of choices (the active ones). Even if we do not give an exhausting list, it must give the feeling that the user always chooses among AV, not whatever value it comes to her mind. I tried to put some lists in some queries, but I fear I chose impossible values (there might be no movie with Umiliani-Monicelli-Comedy starring Vittorio De Sica). Claudio/Lucio, could we replace them with real AVs in a step-by-step experiment? - Claudio}\claudio{Ok!}
%\vspace{7mm}

\noindent\begin{example}[Protocol 1 - The user rejects a recommendation and the item is discarded from the dataset.]
%\subsubsection{Example Interaction: Protocol 1 - the user rejects a recommendation and the item is discarded from the dataset} \dietmar{<<Repeat the nature of the protocol>>}
~\newline
%\newline
\begin{description}
\item[System:] Tell me a film score composer you like,  I have these options: \textit{Ennio Morricone}, \textit{Alessandro Alessandroni}, \textit{Paul Sawtell}, \textit{Frank Skinner}.
\item[User:] I like Ennio Morricone.
\item[System:] Who is one of your most liked actors, in your opinion among \textit{Leonardo Di Caprio}, \textit{Clint Eastwood}, \textit{Gian Maria Volonté}?
\item[User:] I like Ennio Morricone.
\item[System:] Do you like \textit{Quentin Tarantino} as film director?
\item[User:] Yes, I like Quentin Tarantino.
\item[System:] Based on your answers,  I recommend you \textit{Django Unchained}. What do you think about it?
\item[User:] Sorry, I don't want it---a friend of mine spoiled the plot for me.
\end{description}
\noindent \textup{The recommendation failed. \textit{Django Unchained} is removed from the set of recommendable items and the conversation goes on. After several failures the conversation may continue as in the following conversation.}
\begin{description}
\item[System:] Who is one of your most loved film score composers among \textit{Piero Umiliani}, \textit{Armando Trovajoli}, \textit{Piero Piccioni}?
\item[User:] I like Piero Umiliani.
\item[System:] Do you like \textit{Mario Monicelli} as film director?
\item[User:] Yes, I do. I like Mario Monicelli.
\item[System:] For the above choices, I have movies of the following genres: \textit{Comedy, Neorealistic, Historical}. Which one would you prefer?
\item[User:] Let's try comedy movies.
\item[System:] Which actor/actress would you prefer most among \textit{Vittorio Gassman, Vittorio De Sica, Sophia Loren}?
\item[User:] I like Vittorio Gassman.
\item[System:] Based on your answers, I recommend \textit{Big Deal on Madonna Street}. What do you think?
\item[User:] Great, I'll try this one!
\end{description}
\noindent \textup{The recommendation is successful. The dialog is over.}
\qed
\end{example}

\noindent\begin{example}[Protocol 2 - The user rejects a recommendation and the CRS asks to declare a disliked feature.]
~\newline

\begin{description}
\item[System:] What kind of genre do you prefer among \textit{Western, Historical} and \textit{Love Story}?
\item[User:] I would like to explore Western movies.
\item[System:] Who is one of your favorite film score composers? I have these options: \textit{Ennio Morricone}, \textit{Alessandro Alessandroni}, \textit{Paul Sawtell}, \textit{Frank Skinner}.
\item[User:] I like Ennio Morricone.
\item[System:] Who is one of the best actor among \textit{Leonardo Di Caprio}, \textit{Clint Eastwood}, \textit{Gian Maria Volonté} and \textit{Tim Roth} in your opinion?
\item[User:] I like Leonardo Di Caprio.
\item[System:] Do you like \textit{Quentin Tarantino} as film director?
\item[User:] I like Leonardo Di Caprio.
\item[System:] Based on your answers  I recommend \textit{Django Unchained}. Do you think you'll like it?
\item[User:] No, I don't.
\item[System:] With the choices you gave me, I have no more matching movies. Would you mind reconsidering one of your stated  preferences?
\item[User:] Ok, let's forget about Leonardo Di Caprio. I also like Tim Roth.
\end{description}
\noindent \textup{At this point the system discards movies by \textit{Leonardo Di Caprio} and concentrates on movies starring \textit{Tim Roth}.}
\begin{description}
\item[System:] Do you like \textit{Robert Richardson} as director of photography?
\item[User:] Sure, I know Robert Richardson as a good director of photography.
\item[System:]  Would you prefer movies that have won an Oscar?
\item[User:] Yes, I do.
\item[System:] Based on your answers, I recommend \textit{The Hateful Eight}. Will you try it out?
\item[User:] Sounds good, I will.
\end{description}
\noindent \textup{The recommendation is successful. The dialog is over.}
\qed
\end{example}

\subsection{Results and Discussion} \label{sec:discussion}
%\dietmar{Repeat what we measured: Repeat for all items as target item, simulate with either strategy, count the number of required interactions to find the right item.}
In this section we show the experimental results. For each itemset (namely, \textit{IS1} and \textit{IS2}) we compared the two protocols (\ie \textit{P1} and \textit{P2}) and we counted the Number of Questions (NQ) required for reaching the test item $\hat{I}$.
Figures~\ref{plot:avg_questions} and~\ref{plot:max_questions} report the average number of questions required to reach a test item (the average is computed on all the configurations), and the overall maximum number of questions for each protocol and itemset.
\pgfplotsset{compat=1.16}
\begin{figure}[H]
\footnotesize
\centering
    \begin{subfigure}[b]{0.45\textwidth}
    \centering
        \begin{tikzpicture}
            \begin{axis}[
            every axis plot post/.style={/pgf/number format/fixed},
            width=1.2\textwidth,
            ybar=10pt,
            bar width=12pt,
            ymin=0,
            axis on top,
            ymax=200,
            xtick=data,
            axis y line=none, 
            axis x line=bottom,
            enlarge x limits=0.5,
            every node near coord/.append style={font=\scriptsize},
            legend style={at={(0,1.3)},nodes={scale=0.8, transform shape},anchor=north west},
            symbolic x coords={Itemset1, Itemset2},
            restrict y to domain*=0:220, % Cut values off at 220
            visualization depends on=rawy\as\rawy, % Save the unclipped values
            after end axis/.code={ % Draw line indicating break
                    \draw [ultra thick, white, decoration={snake, amplitude=1pt}, decorate] (rel axis cs:0,1.05) -- (rel axis cs:1,1.05);
                },
            nodes near coords={%
                    \pgfmathprintnumber{\rawy}% Print unclipped values
                },
            axis lines*=left,
            clip=false
            ]
            \addplot coordinates {(Itemset1, 72.64025945) (Itemset2, 1023.745268)}; 
            \addplot coordinates {(Itemset1, 67.45143592) (Itemset2, 166.344615)};
            \legend{Protocol 1, Protocol 2}; 
            \end{axis} 
        \end{tikzpicture}
    \caption{Average number of Questions per Configuration}
    \label{plot:avg_questions}
    \end{subfigure}
\hfill
\begin{subfigure}[b]{0.45\textwidth}
\centering
\begin{tikzpicture}
\begin{axis}[
    every axis plot post/.style={/pgf/number format/fixed},
    width=1.2\textwidth,
    ybar=10pt,
    bar width=12pt,
    ymin=0,
    axis on top,
    ymax=2000,
    xtick=data,
    axis y line=none, 
    axis x line=bottom,
    enlarge x limits=0.5,
    every node near coord/.append style={font=\scriptsize},
    legend style={at={(0,1.3)},nodes={scale=0.8, transform shape},anchor=north west},
    symbolic x coords={Itemset1, Itemset2},
    restrict y to domain*=0:2500, % Cut values off at 220
    visualization depends on=rawy\as\rawy, % Save the unclipped values
    after end axis/.code={ % Draw line indicating break
            \draw [ultra thick, white, decoration={snake, amplitude=1pt}, decorate] (rel axis cs:0,1.05) -- (rel axis cs:1,1.05);
        },
    nodes near coords={%
            \pgfmathprintnumber{\rawy}% Print unclipped values
        },
    axis lines*=left,
    clip=false
    ]
\addplot coordinates {(Itemset1, 1588) (Itemset2, 6671)}; 
\addplot coordinates {(Itemset1, 1152) (Itemset2, 524)};
\legend{Protocol 1, Protocol 2}; 
\end{axis} 
\end{tikzpicture}
\caption{Maximum number of Questions in Configurations}
\label{plot:max_questions}
\end{subfigure}
\caption{}
\end{figure}
 %\dietmar{Write Itemset instead of itemset in the figures. Repeat characteristics of itemsets IS1: few features/many values, have x-axis label, write out MAX. - Claudio}\claudio{Ok!}
%Figure~\ref{plot:avg_questions} shows the differences between Protocol P1 and Protocol P2 on the two itemsets in terms of average number of questions (NQ).
As expected from the theoretical analysis, with IS1 we observe minor differences
%\dietmar{Why would we expect marginal differences? What we expect are significant differences in one situation (IS2), and this should be reported first. Also: results of statistical significance tests should be reported.}
between the two protocols in terms of number of questions made by the system before reaching and recommending the test item. More specifically, P1 needs 72.64 as average number of questions with IS1, whereas P2 needs 67.45 questions. The difference is however huge for IS2 where P1 needs more than 1,000 questions on average to reach the test item, while P2 needs around 166 questions. %\dietmar{Since we only have only 4 and 12 items respectively, why do we need so many interactions? The test/target item (terminology) is always available, what is happening? Okay, i see this follows later. Was expecting immediately after this.}
Hence, we can confirm that when the items in the catalog have several features with few distinct values, the efficiency of P2 grows drastically compared to P1. Also \emph{maximum} number of questions---reported in Figure~\ref{plot:max_questions} for each pair of protocol and itemset---confirms this different efficiency for IS1 and IS2. %\dietmar{The metric was not introduced earlier, but maybe okay if it is better explained. If it is the single highest number of questions, we probably don't need a chart. Also: we are not concerned with different behavior for IS1 and IS2, but with different levels of efficiency.}

We now discuss the fact that NQ is very large---indeed, also in the best combination (P2 with IS1), the number of questions is close to 70, which sounds unrealistic for any real user. It is worth noting that in this experiment we implemented the very worst-case scenario. Our experiment used an unrealistic setting on purpose. In our particular scenario the recommendation task is deliberately very difficult:
\begin{itemize}
\item for each dialog, there is only one test item (true positive) to be found in the catalog and we do not consider the usual possibility that the user would accept several items;
\item the CRS works in cold-start condition without any user profile, thus several questions are needed for acquiring the user preferences;
\item the CRS does not implement a cut-off on the number of questions to ask the user.
%Furthermore, in particular with the IS1 where each item is provided with only four features, in some cases the CRS had to provide a recommendation with a really few preferences. Let's imagine the first interaction for each user: the CRS asks for the director, the actor, the producer, the writer, and next has to recommend a movie. It is easy to understand that this is an extreme cold-start condition.
\end{itemize}
%for each dialog (\ie configuration), there is only one test item (true positive) to be found in the catalog, and we do not consider the usual possibility that the user would accept several items. Furthermore, in particular with the IS1 where each item is provided with only four features, in some cases the CRS had to provide a recommendation with a really few preferences. Let's imagine the first interaction for each user: the CRS asks for the director, the actor, the producer, the writer, and next has to recommend a movie. It is easy to understand that this is an extreme cold-start condition.

%the extreme cold-start conditions we adopted, with a very low quantity of information each recommendation exploits: in some cases the system had to provide a recommendation with less than four preferences, making its recommendations very error-prone.
%\dietmar{I frankly do not understand this. Maybe it has to do with the cold-start thing I commented out and did not understand. This should be better motivated in detail upfront and explained.}

%We also collected the overall execution time for each protocol and itemset. The outcomes are very similar to those achieved in terms of NQ. In fact, on IS1 the execution time is quite the same with both protocols, whereas the difference is tremendously larger on IS2 with P1 than P2 on IS2 ($+62\%$). \dietmar{I would remove this question of computational complexity. It is not so interesting I think}

In conclusion, the results of our experimental evaluation are aligned with our theoretical findings and we can thus confirm both the research hypotheses H1 and H2. % that are:

\begin{comment}
\begin{itemize}
    \item[H1] We do not expect a huge difference in terms of efficiency between P1 and P2 when the items in the catalog have few features with a large number of distinct values (\ie, IS1): \textbf{confirmed} \dietmar{Not sure we need to repeat the slighly differently phrased hypotheses.}
    \item[H2] We expect a huge difference in terms of efficiency between P1 and P2 when the items in the catalog have several features with few distinct values (\ie IS2): \textbf{confirmed}
\end{itemize}
\end{comment}
% \input{charts/hours}

In other words, our simulation confirmed what was foreseen by the theoretical analysis, namely that the difference between protocol~P1 and protocol~P2 shows up clearly only in a dataset in which there are many features  with a small set of different values.

\subsection{Limitations}
Even though our preliminary experimental evaluation confirms our theoretical result, we identified some limitations that deserve a further investigation. The first limitation of this study is related to the simulation of the dialog. The average number of questions as well as the maximum number of questions, also in the best scenario, is too large for a real-world context. This is due to the fact that we simulated the very worst-case scenario.
Furthermore, a study where the complexity analysis is done with different catalog configurations, from the worst to the best, could help in a better understanding of the final results.
%It should be also interesting to carry out 
Moreover, a user study in a real scenario where the user can choose more than one recommendation, and the recommendation specifically reflects her interests could surely be very helpful. 
Another limitation of this work is the domain. There are domains where the items in the catalog are characterized by few features with a very large number of possible values (as Itemset2 in our simulation). Currently, we are not able to propose a strategy for dealing with these situations.

%whose values are shared by several items. \dietmar{I would rephrase the last half-sentence in the logic form above, i.e., ``are lots of features with a small set of different values''.}
%, while such a difference is more negligible \dietmar{See my comment above} when most items in the dataset have unique feature values.
%Finally, we analyzed the maximum number of questions made by the system  the interactions. The interaction with the largest number of questions with P1 on IS1 made 2
%\input{charts/max} 

\section{Summary and Outlook}\label{sec:conclusion}
%\textbf{Future directions of research.}
%\dietmar{To be done, I can take care.}
While conversational approaches to recommendation have been explored since the late 1990s, we observe an increasing interest in these types of system in recent years in particular due to recent developments in natural language processing. Independent of the interaction modality, efficiency in terms of required dialog turns has commonly been a main target of research. So far, however, questions of efficiency were almost exclusively investigated in an empirical manner. With this work, we contribute to a better understanding of theoretical properties of conversational recommendation problems and we specifically address questions related to the computational complexity of finding efficient dialog strategies. One main insight of our theoretical analysis---which was also confirmed by a computational experiment---is that when designing an efficient conversation strategy, we must always consider the characteristics of the item catalog. 

More specifically, we demonstrated that when the items in the catalog are characterized by a few features with a large number of distinct values, the critiquing strategy based on asking the user about a disliked characteristic of the recommended item does not give any significant advantage in terms of user effort. Conversely, when the catalog is composed of items with several features with a few distinct values, a critique strategy based on item features can drastically reduce the user effort for reaching a liked recommendation.

On a more general level, we hope that our work might help to stimulate more theory-oriented research in this area, leading us to a better understanding of the foundational properties of this essential class of interactive AI-based systems.

%\dietmar{repeat a main insight: optimizing the strategy without considering dataset characteristics might not enough.}

The problem formalization chosen in our work targets the predominant class of \crs in the literature---systems that  interactively elicit user preferences regarding item features---and covers a set of common interaction types in these types of systems. For the purpose of the current study, we however limited our formalization to two basic preference elicitation steps: \emph{(a)} by providing information about individual feature values, and \emph{(b)} by providing feedback for individual items as a whole. In our future work, we plan to extend the formalization to allow users to state preferences on combination of feature values, \eg ``\textit{I don't like open-air Japanese restaurants}'', or ``\textit{I like ocean-view, romantic Italian restaurants}''. Other simplifications in our current work that will be addressed in the future include the consideration of multi-valued features, both in the queries and the item catalog, and situations where individual item features might be unknown.

Finally, we see another and potentially more far-reaching direction for future research in the explicit consideration of individual long-term user preferences in the interactive recommendation process.

\bibliographystyle{elsarticle-num}

\bibliography{kr-and-complexity, recommenderSystems}

%%
%% If your work has an appendix, this is the place to put it.
%\appendix

\end{document}
\endinput